\newcommand{\n}[1]{\textcolor{red}{#1}}
\newtheorem{theorem}{Theorem}
\newtheorem{proposition}{Proposition}
\renewcommand{\vec}[1]{\ensuremath{\bm{#1}}}
\newcommand{\mat}[1]{\ensuremath{\mathbf{#1}}}
\newcommand{\transpose}{\ensuremath{^\top}}
\newcommand{\bigsum}[3]{\mathlarger{\sum}_{#1 = #2}^{#3}}
\newcommand{\bigsumm}[1]{\mathlarger{\sum}_{#1}}
\DeclareMathOperator{\Aut}{Aut}
\theoremstyle{plain}
\theoremstyle{definition}
\theoremstyle{remark}
\newtheorem{example}{Example}
\newcommand{\vcentered}[1]{\begingroup
\setbox0=\hbox{#1}%
\parbox{\wd0}{\box0}\endgroup}
\icmltitlerunning{Boosting GNN Expressivity with Learnable Lanczos Constraints}
\begin{document}
\twocolumn[
\icmltitle{Boosting Graph Neural Network Expressivity with Learnable Lanczos Constraints}

\icmlsetsymbol{equal}{*}
\begin{icmlauthorlist}
    \icmlauthor{Niloofar Azizi}{vec}
    \icmlauthor{Nils Kriege}{vienna}
    \icmlauthor{Horst Bischof}{tug}
\end{icmlauthorlist}

\icmlaffiliation{vec}{Vector Institute, Canada}
\icmlaffiliation{vienna}{University of Vienna, Austria}
\icmlaffiliation{tug}{Graz University of Technology, Austria}

\icmlcorrespondingauthor{Niloofar Azizi}{niloofarazizi37@gmail.com}

\icmlkeywords{Machine Learning, ICML}

\vskip 0.3in

]

\printAffiliationsAndNotice{}

\begin{abstract}
Graph Neural Networks (GNNs) excel in handling graph-structured data but often underperform in link prediction tasks compared to classical methods, mainly due to the limitations of the commonly used message-passing principle. Notably, their ability to distinguish non-isomorphic graphs is limited by the 1-dimensional Weisfeiler-Lehman test. Our study presents a novel method to enhance the expressivity of GNNs by embedding induced subgraphs into the graph Laplacian matrix's eigenbasis.
We introduce a \textbf{L}earnable \textbf{L}anczos algorithm with \textbf{L}inear \textbf{C}onstraints
(LLwLC), proposing two novel subgraph extraction strategies: encoding vertex-deleted subgraphs and applying Neumann eigenvalue constraints. For the former, we demonstrate the ability to distinguish graphs that are indistinguishable by 2-WL, while maintaining efficient time complexity. The latter focuses on link representations enabling differentiation between $k$-regular graphs and node automorphism, a vital aspect for link prediction tasks. Our approach results in an extremely lightweight architecture, reducing the need for extensive training datasets. 
Empirically, our method improves performance in challenging link prediction tasks across benchmark datasets, establishing its practical utility and supporting our theoretical findings. Notably, LLwLC achieves $20\text{x}$ and $10\text{x}$ speedup by only requiring 5\% and 10\% data from the PubMed and OGBL-Vessel datasets while comparing to the state-of-the-art. 
\end{abstract}

\section{Introduction}
\label{introduction}

Graphs play a crucial role in various domains, representing linked data such as social networks~\citep{adamic2003friends}, citation networks~\citep{shibata2012link}, knowledge graphs~\citep{nickel2015review}, metabolic network reconstruction~\citep{oyetunde2017boostgapfill}, and user-item graphs in recommender systems~\citep{monti2017geometric}. Graph Neural Networks (GNNs) have emerged as state-of-the-art tools for processing graph-structured data. Message Passing Neural Networks (MPNNs) is the most prevalent technique within GNNs, which, relying on neighborhood aggregation, exhibit expressiveness no greater than the first-order Weisfeiler-Leman (1-WL) test~\citep{weisfeiler1968reduction,morris2019weisfeiler,Xu2019,wl_survey}. Therefore, MPNNs cannot distinguish specific graph structures, \emph{e.g.,} $k$-regular graphs. 
Moreover, link prediction (LP) tasks cannot always be answered reliably based on pairs of node embeddings obtained from MPNNs. Specifically, the node automorphism problem arises in instances where two nodes possess identical local structures, resulting in equivalent embeddings and, consequently, identical predictions. However, their relationships to a specific node, \emph{e.g.,} in terms of distance, may differ~\citep{Zhang2021,chamberlain2022}.

Efforts aimed at augmenting the expressiveness of MPNNs have pursued four main directions: aligning with the $k$-WL hierarchy~\citep{morris2019weisfeiler,maron2018invariant, azizian2020expressive}, enriching node features with identifiers, exploiting structural information that cannot be captured by the WL test~\citep{bodnar2021weisfeiler, bodnar2021weisfeilern}, and Subgraph GNNs (SGNNs)~\citep{bevilacqua2021equivariant, guerra2022explainability}. SGNNs are a recent class of expressive GNNs that model graphs through a collection of subgraphs, extracted explicitly or implicitly. 
Subgraph extraction can be achieved, \emph{e.g.,} by removing or marking specific nodes or directly counting specific substructures~\citep{papp-marking}. However, in the worst case, previous SGNNs involve computationally intensive preprocessing steps or running a GNN many times.

\par To address these limitations, we introduce a novel approach grounded in graph signal processing~\citep{ortega2018graph} and spectral graph theory~\citep{chung1997spectral}. Our method introduces a novel eigenbasis, denoted as the \textbf{L}earnable \textbf{L}anczos with \textbf{L}inear \textbf{C}onstraints (LLwLC), which can explicitly encode linear constraints, particularly those derived from extracted induced subgraphs, into the basis. We propose a low-rank approximation~\citep{eckart1936approximation} of the Laplacian matrix based on the Lanczos algorithm with linear constraints~\citep{golub2000large}.

\begin{figure*}[ht]
 \centering
\begin{overpic}[width=0.89\textwidth]{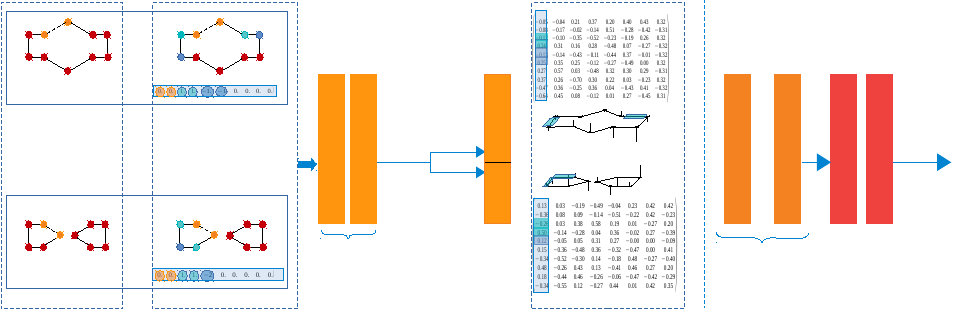}
    \put(38.,21.1){\rotatebox{-90}{ \scriptsize\text{Inner Loop}}}
    \put(36.7,22.5){\rotatebox{-90}{\scriptsize $\mat{P}(\vec{\nu})$ $\rightarrow$ \text{LSQR}}}
    \put(35.,21.1){\rotatebox{-90}{ \scriptsize \text{Outer Loop}}}
    \put(33.6,22.5){\rotatebox{-90}{  \scriptsize \text{Lanczos Steps}}}
  \put(51.5,14.){\rotatebox{-90}{\small $\mat{R}$}}
  \put(51.3,24.3){\rotatebox{-90}{\small $ \mat{V} = \mat{Q}\mat{B}$}}
  \put(81.7,24.9){\rotatebox{-90}{\tiny $\sigma(\mat{V}f_{k}(\mat{R})\mat{V}{\transpose}\mat{X}_{k}\mat{W})$}}
    \put(76.5,24.9){\rotatebox{-90}{\tiny $\sigma(\mat{V}f_1(\mat{R})\mat{V}{\transpose}\mat{X}_1\mat{W})$}}
  \put(87.8,23.5){\rotatebox{-90}{\small Global Pooling}}
  \put(91.5,23.9){\rotatebox{-90}{\small Fully Connected }}
 \put(99.9,16.){\rotatebox{-0}{ $\hat{y}$}}
  \put(40.9,16.5){\scriptsize $\mat{Q}$}
  \put(40.,13.6){\scriptsize $\mat{R}, \mat{B}=$}
  \put(40.0, 12.){\small $\text{evd}(\mat{T})$}
    \put(32., 6.){\small To satisfy $\mat{C}$} 
  \put(18.,1.){\scriptsize One Column of $\mat{C}$}
  \put(2.,1.){\scriptsize Input Graph}
  \put(56.8,0.9){\scriptsize Eigenbasis $\mat{V} = \mat{Q}\mat{B}$}
  \put(74.5, 5.5){\small Stacking of $k$ multiple layers}
  \end{overpic}
  \caption{The comprehensive workflow of LLwLC: We introduce a novel eigenbasis to boost the expressivity of GNNs by introducing constraints. For the two $2$-regular input graphs (first dotted block), the constraint matrix $\mat{C}$ (two columns shown in the second dotted block) is derived. Specifically, as depicted, for the first Neumann eigenvalue constraint (focused on link representation), we integrate the degrees of nodes into one column of the constraint matrix $\bm{C}$ such that $\mat{C}{\transpose}\vec{f}=0$ ensures \( \vec{f}(x) - \vec{f}(y) = 0 \) for adjacent nodes $x$ and $y$ in the boundary. To predict the dotted link, nodes up to two hops away are analyzed: one-hop in light blue and two-hop in dark blue, centered on yellow query nodes. Based on the input graph Laplacian matrix $\mat{L}$ and the constraint matrix $\mat{C}$, we address the eigenvalue problem under these linear constraints using the Lanczos algorithm with the linear constraint consisting of an outer loop (Lanczos algorithm) and an inner loop ($\mat{P}(\nu)$ to $\mathcal{N}(\mat{C})$ with LSQR algorithm) resulting in the eigenpairs $\mat{V} = \mat{Q}\mat{B}$ and $\mat{R}$. The third block demonstrates constructing eigenbases for the two 2-regular graphs, ensuring features conform to constraints (indicated in blue).
  We stack multiple blocks, where block $i$ learns the new features $\mat{X}_{i+1}=\sigma(\mat{V}f_i(\mat{R})\mat{V}{\transpose}\mat{X}_i\mat{W})$ by applying a multi-layer perceptron $f_i$ to the eigenvalue matrix $\mat{R}$. This sequence concludes with global pooling and a fully connected layer to output the predicted link probability $\hat{y}$.}
   \label{loop-graph}
  \end{figure*}

\par The LLwLC eigenbasis enhances feature expressiveness by incorporating linear constraints derived from the graph structure. We propose two novel subgraph extraction policies, focusing on vertex-deleted subgraphs and Neumann eigenvalue constraints. By leveraging vertex-deleted subgraphs, LLwLC can effectively distinguish between graphs that are not separable using the 2-WL test, while Neumann constraints enable the encoding of boundary conditions and link representations between nodes, allowing differentiation of $k$-regular graphs that are indistinguishable by the WL test. Theoretical analysis indicates that LLwLC can be applied in various problem settings. We evaluate its effectiveness in link prediction tasks, where expressivity and particularly the ability to distinguish automorphic nodes is pivotal. 
\par Our research presents several key contributions: 
(i) We develop a novel, efficient eigenbasis (LLwLC) for encoding linear constraints, including induced subgraphs, using the Lanczos algorithm with linear constraints. (ii)  We explore the application of Neumann eigenvalue constraints within this eigenbasis, which encodes induced subgraphs and link representations. (iii) We provide a comprehensive theoretical analysis of LLwLC's convergence. (iv) We investigate the impact of vertex-deleted subgraphs on improving expressiveness of our proposed LLwLC. (v) We conduct extensive experiments showcasing the significant impact of the LLwLC on the challenging link prediction task.

\section{Preliminaries}
\paragraph{Notations} An undirected graph $G(V, E, \mat{X})$ consists of a vertex set $V$, edge set $E$, and node features $\mat{X} \in \mathbb{R}^{n \times d}$, where $n$ is the number of nodes. Each row $\vec{x}_v \in \mathbb{R}^d$ represents the features of node $v \in V$. $\mat{A}$ and $\mat{D}$ are the graph's adjacency and degree matrices, respectively. The graph Laplacian is $\mat{L} = \mat{D} - \mat{A}$, with $\mat{U}$ and $\mat{\Lambda}$ as its eigenvector and eigenvalue matrices. The degree of node $v$ is denoted by $d_v$.
\paragraph{Spectral Graph Convolutional Networks} 
For a graph signal $\vec{x} \in \mathbb{R}^n$,~\citet{shuman2013emerging} define the graph Fourier transform, $\mat{U}\transpose\vec{x}$, and its inverse, $\mat{U}\vec{x}$, based on the eigenbasis of the graph Laplacian matrix $\mat{L}$. 
The graph convolution is $\mat{U} (\mat{U}\transpose\vec{x} \ast \mat{U}\transpose\vec{y}) = \mat{U}g(\mat{\Lambda})\mat{U}\transpose\vec{x}$
where $\vec{y}$ is the graph filter, $g$ is the function applied over the eigenvalue matrix $\mat{\Lambda}$ to encode the graph filter, and $\ast$ is the elementwise multiplication. The seminal spectral GCN method~\cite{bruna2013spectral} is cubic in the number of nodes. To address this, different $g$ functions were defined~\cite{henaff2015deep, azizi20223d}. 
LanczosNet~\cite{liao2019lanczosnet} uses the Lanczos algorithm for fast multi-scale computation with learnable spectral filters.
However, like previous GNNs, LanczosNet has limited expressivity,~\emph{e.g.}, it cannot distinguish between the $k$-regular graphs. To address these limitations and enhance feature expressiveness in GNNs, we introduce a novel learnable spectral basis for encoding subgraphs as linear constraints.
\section{Lanczos with Linear Constraint Networks}
\label{llwlc}
This section outlines the Lanczos Algorithm with Linear Constraints and the construction of a constraint matrix $\mathbf{C}$ for subgraph embedding, followed by the development of the complete LLwLC block and LLwLCNet pipeline. It also includes a proof of convergence properties and explores subgraph extraction policies, focusing on Neumann eigenvalue constraints and vertex-deleted subgraphs.
\begin{algorithm}
\small
\captionof{algorithm}{Lanczos Algorithm with Linear Constraint (LLwLC). Main steps for subgraph encoding in the Graph Laplacian eigenbasis are highlighted in red.}
\label{algo:Lanczos-lin}
\begin{algorithmic}[1]
\STATE \textbf{Input:} $\mat{L}, \mat{P} = \mat{I} - \mat{C}(\mat{C}\transpose\mat{C})^{-1}\mat{C}\transpose$,  random initialized vector $\vec{\nu},$ number of steps $\kappa$, error tolerance $\epsilon$
\STATE \textbf{Init:} $\vec{\nu}_1 = \n{\mat{P}}(\vec{\nu}), \beta_1 = \lVert \vec{\nu}_1 \rVert_2, \vec{q}_0 = 0$
\FOR{$j = 1$ \textbf{to} $\kappa$}
    \STATE $\vec{q}_j = \frac{\vec{\nu}_j}{\beta_j}$
    \STATE \textcolor{red}{$\vec{p}_j = \mat{P}(\mat{L}\vec{q}_j)$} 
    \STATE $\vec{u}_j = \mat{L}\vec{q}_j - \beta_{j}\vec{q}_{j-1}$
    $\rightarrow$ \textcolor{red}{$\vec{p}_j  - \beta_j\vec{q}_{j-1}$}
    \STATE $\alpha_j = \vec{u}_j\transpose\vec{q}_j$
    \STATE $\vec{\nu}_{j+1} = \vec{u}_j - \alpha_j\vec{q}_j$
    \STATE $\beta_{j+1} = \lVert \vec{\nu}_{j+1}\rVert_2$
    \IF{$\beta_{j+1} \leq \epsilon$}
        \STATE \textbf{quit}
    \ENDIF
\ENDFOR
\STATE $Q = [q_1, \dots, q_{\kappa} ]$, Construct $\mat{T}$
\STATE $\text{EVD}(\mat{T}) = \mat{B}\mat{R}\mat{B}\transpose$
\STATE \textbf{Return} $\mat{V} = \mat{Q}\textcolor{red}{\cdot}\mat{B}$ and $\mat{R}$
\end{algorithmic}
\end{algorithm}

\begin{algorithm}
\small
\caption{LLwLCNet}
\label{llwlcnet}
\begin{algorithmic}[1]
\STATE \textbf{Input:} \text{LLwLC output} $\mat{V}, \mat{R}, \text{signal } \mat{X}$
\STATE \textbf{Init:} $\mat{X}_0 = \mat{X}$
\FOR{$i = 0$ \textbf{to} $k - 1$}
    \STATE $\mat{X}_{i+1} = \sigma(\mat{V}f_i(\mat{R})\mat{V}\transpose\mat{X}_i\mat{W}_i)$
\ENDFOR \\
$\hat{y}$ = \text{Fully Connected}(\text{Global Pooling($\mat{X}_{k}$)})
\STATE \textbf{Return} $\hat{y}$
\end{algorithmic}
\end{algorithm}

\subsection{Lanczos Algorithm with Linear Constraints}
For a given symmetric matrix $\mat{L} \in \mathbb{R}^{n\times n}$ and a randomly initialized vector $\vec{\nu} \in \mathbb{R}^n$, the $\kappa$-step Lanczos algorithm~\citep{Lanczos1950} computes an orthogonal matrix $\mat{Q} \in \mathbb{R}^{n \times m}$ and a symmetric tridiagonal matrix $\mat{T} \in \mathbb{R}^{m \times m}$, such that $\mat{Q}\transpose\mat{L}\mat{Q} = \mat{T}$. We represent $\mat{Q}_N = [\vec{q}_1, \dots, \vec{q}_N]$ where the column vector $\vec{q}_i$ corresponds to the $i$\textsuperscript{th} Lanczos vector. The matrices $\mat{B}\in \mathbb{R}^{m \times m}$ and $\mat{R}\in \mathbb{R}^{m \times m}$ represent the eigenvectors and eigenvalues of $\mat{T}$, respectively. By investigating the $j$\textsuperscript{th} column of the system $\mat{L}\mat{Q} = \mat{Q}\mat{T}$ and rearranging terms, we obtain $\mat{L}\vec{q}_j = \beta_{j+1} \vec{q}_{j+1} + \beta_{j}\vec{q}_{j-1} + \alpha_j \vec{q}_j$. 
\par Having the linear constraint changes the plain Lanczos algorithm by replacing $\vec{u}_j = \mat{L}\vec{q}_j - \beta_{j}\vec{q}_{j-1}$  with $\vec{u}_j = \vec{p}_j - \beta_j\vec{q}_{j-1}$ assuming the initial vector $\vec{\nu}$ is projected into the null space of the constraints~\citep{golub2000large}. Please note that the orthogonal projector $\mat{P}$ can be obtained through the QR decomposition of $\mat{C}$ when dealing with a dense constraint matrix $\mat{C}$. In situations where $\mat{C}$ is sparse and $\text{dim}(\mathcal{N}(\mat{C}\transpose)) \approx n$, the projector is given by $\mat{P} = \mat{I} - \mat{C}\mat{C}^{\dagger}$, with $\mat{C}^\dagger$ being the Moore-Penrose inverse of $\mat{C}$. Assuming $\mat{C}$ has full column rank, $\mat{C}^{\dagger}$ can be computed as $(\mat{C}\transpose\mat{C})^{-1}\mat{C}\transpose$~\citep{bjorck1996numerical}. If we project the initial vector $\vec{\nu}$ into null space of the constraint matrix $\vec{\nu}_1 = \mat{P}\vec{\nu} \in \mathcal{N}(\mat{C}\transpose)$ and notice the mathematical equivalence between computing the smallest eigenvalue of the constraint  $A_p = \mat{P}\transpose\mat{L}\mat{P}$ and $\mat{L}$ then one step of the Lanczos algorithm with the linear constraints is $ \beta_{j+1} \vec{q}_{j+1} = \mat{P}\mat{L}\mat{P}\vec{q}_j - \beta_{j}\mat{P}\vec{q}_{j-1} - \alpha_j\mat{P}\vec{q}_j= \mat{P}(\mat{L}\vec{q}_j - \beta_{j}\vec{q}_{j-1} - \alpha_j\vec{q}_j)$. Algorithm~\ref{algo:Lanczos-lin} describes the steps of the Lanczos Algorithm with the Linear Constraints in detail.
\par This algorithm is structured into two main components: the 'outer loop', which is a straightforward Lanczos algorithm iteration, and the 'inner loop', which focuses on resolving the least squares problem expressed as $\mat{P}(\vec{b}) = \underset{y \in \mathbb{R}^l}{\min}{\lVert \mat{C}\vec{y} - \vec{b}\rVert_2}$. Here, $\vec{y}$ is defined as $\mat{C}^\dag\vec{b}$, and $\vec{b}$ is $\mat{L}\vec{q}_j$.

\subsection{LLwLCNet}
\par In this part, we detail our approach to computing the eigenvectors of the graph Laplacian matrix, ensuring they adhere to input graph constraints. We construct our eigenbasis by addressing a large, sparse, symmetric eigenvalue problem with homogeneous linear constraints. It requires minimizing
\begin{equation}
    \underset{\mat{C}{\transpose}\vec{f} = 0, \vec{f} \neq 0}{\min}\frac{\vec{f}{\transpose}\mat{L}\vec{f}}{\vec{f}{\transpose}\vec{f}},
    \label{eig-prob1}
\end{equation}
where $\mat{C} \in \mathbb{R}^{n \times l}$ with $n \gg l$ is the constraint matrix.

To address the problem outlined in Equation~\ref{eig-prob1}, we utilize the Lanczos algorithm with the linear constraints, detailed in Algorithm~\ref{algo:Lanczos-lin}. However, differing from the iterative approach for the least square equation suggested in the Lanczos algorithm with the linear constraints to solve
\begin{equation}
\mat{C}\vec{y} = \vec{b},    
\label{inner-loop}
\end{equation}
we utilize the PyTorch framework~\citep{paszkeautomatic} for our computations. This choice is due to our sparse and not overly large constraint matrix allowing for the direct QR factorization~\citep{anderson1992generalized} within PyTorch, offering numerical stability and the capability for backpropagation. In the following, we describe constructing the constraint matrix $\mathbf{C}$, where we extract subgraphs first and derive the constraint matrix accordingly.

\paragraph{Constraint Matrix $\mathbf{C}$}
The matrix $\mathbf{C}$ allows us to specify linear constraints that the graph Laplacian matrix's eigenvectors must satisfy. Each column of the constraint matrix represents a distinct constraint. We introduce two approaches for defining the constraint matrix $\mathbf{C}$ on the basis of specific subgraphs: 
vertex-deleted subgraphs and Neumann eigenvalue constraints for link representations. A detailed explanation of how each column of the constraint matrix is built for these methods can be found in Section~\ref{subgraph-extraction-policy}.

\par Addressing the eigenvalue problem with linear constraints yields a tridiagonal matrix, denoted as \(\mathbf{T}\), and an orthogonal matrix \(\mathbf{Q}\). The decomposition of matrix \(\mathbf{T}\) produces matrices \(\mathbf{R}\) and \(\mathbf{B}\). Here, \(\mathbf{R}\) represents the Ritz eigenvalues, and \(\mathbf{V} = \mathbf{Q}\mathbf{B}\) forms the eigenbasis that satisfies the constraints imposed by matrix \(\mathbf{C}\). Forcing the eigenvectors to satisfy appropriately defined constraints leads to having different eigenbasis for graphs where the MPNN returns the same features. This is exemplified in the case described in Figure~\ref{loop-graph}, where two $2$-regular graphs yield two different eigenbases.

\paragraph{Full Block} After determining the eigenbasis, we are ready to establish the full block of the Lanczos Layer with Linear Constraint (LLwLC). In this new eigenbasis, we develop spectral filters by applying a multilayer perceptron $f$ to the eigenvalue matrix $\mat{R}$. With these learned filters, we reconstruct our basis and transform the graph signals $\mat{X} \in \mathbb{R}^{m \times n}$ into this basis to extract features that meet our specific constraints. $\hat{\mat{L}}$ is the graph Laplacian matrix computed from the low-rank approximation of the constrained eigenvalue problem. Each LLwLCNet block is 
\begin{equation}
\sigma(\mat{V}f(\mat{R})\mat{V}\transpose \mat{X}\mat{W}) = \sigma(\hat{\mat{L}}\mat{X}\mat{W}).
\end{equation}
Here, $\mat{W} \in \mathbb{R}^{n \times m}$ represents the learnable weight matrix, and $\sigma$ denotes the non-linearity applied in each block (ReLU in our experiments). 
\paragraph{Full Architecture} As represented in Algorithm~\ref{llwlcnet}, we increase the number of blocks to deepen our architecture and capture more complex features. Each block reuses the initially computed eigenbasis and applies a multi-layer perceptron (MLP) to the eigenvalue matrix $\mat{R}$ to reconstruct its corresponding $\hat{\mat{L}}$. Our complete pipeline concludes by a global sort pooling~\citep{zhang2018end} and a fully connected block in the last layer as depicted in Figure~\ref{loop-graph}, used to predict link existence.

\subsection{Lanczos Algorithm with Linear Constraint Convergence}
In this section, we substantiate LLwLC eigenbasis's convergence properties by conducting an error analysis on perturbations and referencing Greenbaum's findings to demonstrate the existence of an exact Lanczos algorithm for any perturbed version. By establishing the upper bound for the Lanczos algorithm's low-rank approximation, we affirm the convergence of our LLwLC eigenbasis.
\paragraph{Perturbation and Error Study} 
The accuracy of the linear least square problem using QR factorization depends on the precision of the QR factorization. As discussed by ~\citet{zhang2020high}, two types of accuracy errors are crucial in QR factorization when solving linear least square problems: The backward error for a matrix \(\mat{Z}\) is defined as \(\frac{\lVert \mat{Z} - \hat{\mat{Q}} \hat{\mat{R}} \rVert} {\lVert \mat{Z} \rVert}\) and the orthogonality error of \(\hat{\mat{Q}}\) is measured by \(\lVert \mat{I} - \hat{\mat{Q}}{\transpose}\hat{\mat{Q}}\rVert\). Ideally, both numerical errors should be zero, but due to roundoff errors and the potential loss of orthogonality in the Gram-Schmidt QR process, the QR factorization might not be sufficiently accurate for solving the linear least square problem.

\par After examining the impact on accuracy, we analyze the theoretical gap between the exact Lanczos algorithm and its perturbed variant due to inexact QR factorization. The inexact QR factorization applied to solve Eq.~\eqref{inner-loop} will impact the accuracy of both the Lanczos vectors and the tridiagonal matrices produced. Consequently, the computed tridiagonal matrix \(\mat{T}_j\) is a perturbed version of the theoretical tridiagonal matrix, denoted as \(\mat{T}^{\ast}_j\), that would be generated by an exact Lanczos iteration.  This relationship can be expressed as \(\mat{T}_j  = \mat{T}^{\ast}_j + \mat{E}_j\), where \(\mat{E}_j\) is the perturbation matrix after the \(j^{th}\) step. The following theorem details the error bounds of the perturbed tridiagonal matrix in comparison to the theoretical exact solution of \(\mat{T}\) after the \(j^{th}\) step of the Lanczos algorithm.
\begin{theorem}
Let $\mathcal{U}$ and $\tilde{\mathcal{U}}$ be the eigenspaces corresponding to the smallest eigenvalues $\lambda$ and $\tilde{\lambda}$ of the symmetric matrices $\mat{L}$ and $\tilde{\mat{L}} = \mat{L} + \mat{E}$, respectively. Then for any $\vec{u} \in \mathcal{U}$ and $\tilde{\vec{u}} \in \tilde{\mathcal{U}}$ with $\lVert \vec{u} \rVert_2 = 1$ and $\lVert \tilde{\vec{u}} \rVert_2 = 1$, we have $    \tilde{\lambda} - \lambda  \approx \bigsum {i}{1}{j} \mat{E}_j(i,i)\vec{u}(i)^2 + 2\bigsum {i}{1}{j-1} \mat{E}_j(i,i+1)\vec{u}(i)\vec{u}(i+1), $
where $\mat{E}_j(s, t)$ is the $(s, t)$ element of $\mat{E}_j$. 
\label{therotical-eigs-diff}
\end{theorem}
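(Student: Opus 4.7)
The plan is to recognize Theorem~\ref{therotical-eigs-diff} as a first-order Rayleigh--Schrödinger perturbation statement applied to the tridiagonal setting, where the special tridiagonal structure of $\mat{E}_j$ (inherited from the fact that both $\mat{T}_j$ and $\mat{T}^\ast_j$ are symmetric tridiagonal) causes the generic expression $\vec{u}\transpose\mat{E}\vec{u}$ to collapse onto only diagonal and first super-diagonal contributions.

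First I would write $\tilde{\mat{L}} = \mat{L} + \mat{E}$ with $\mat{E} = \mat{E}_j$ and look for the smallest eigenpair $(\tilde\lambda,\tilde{\vec{u}})$ as a smooth perturbation of $(\lambda,\vec{u})$, setting $\tilde{\vec{u}} = \vec{u} + \delta\vec{u}$ with $\delta\vec{u}\perp\vec{u}$ (this is the usual normalization choice and preserves $\lVert\tilde{\vec u}\rVert_2 = 1$ to first order), and $\tilde\lambda = \lambda + \delta\lambda$. Substituting into $\tilde{\mat{L}}\tilde{\vec u} = \tilde\lambda\tilde{\vec u}$, using $\mat{L}\vec u = \lambda\vec u$, and taking the inner product with $\vec u$ eliminates the unknown $\delta\vec u$ term (since $\vec u\transpose\mat{L} = \lambda\vec u\transpose$), yielding the classical identity
\begin{equation*}
  \delta\lambda \;=\; \vec u\transpose\mat{E}\vec u \;+\; O(\lVert\mat{E}\rVert^2).
\end{equation*}
This is the approximation that justifies the ``$\approx$'' in the statement; it is valid whenever the smallest eigenvalue is simple and the perturbation is small enough that standard analytic perturbation theory applies.

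Next I would use the structural observation that $\mat{E}_j = \mat{T}_j - \mat{T}^\ast_j$ is a difference of two symmetric tridiagonal matrices and is therefore itself symmetric tridiagonal: only the entries $\mat{E}_j(i,i)$ and $\mat{E}_j(i,i+1) = \mat{E}_j(i+1,i)$ are nonzero. Expanding the quadratic form $\vec u\transpose\mat{E}_j\vec u$ entry-by-entry and grouping the diagonal and (symmetric) off-diagonal contributions gives
\begin{equation*}
  \vec u\transpose\mat{E}_j\vec u \;=\; \sum_{i=1}^{j}\mat{E}_j(i,i)\,\vec u(i)^2 \;+\; 2\sum_{i=1}^{j-1}\mat{E}_j(i,i+1)\,\vec u(i)\vec u(i+1),
\end{equation*}
which, combined with the previous step, is precisely the claimed asymptotic.

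The main obstacle, and really the only non-routine point, is the first-order perturbation step: it tacitly requires that $\lambda$ be a simple eigenvalue (or at least that the eigenspace $\mathcal U$ be treated via degenerate perturbation theory) and that $\lVert\mat{E}_j\rVert$ be small enough for the analytic branch $(\tilde\lambda,\tilde{\vec u})$ to be well-defined. I would state these as the implicit regime of validity of the ``$\approx$'' and, if desired, make the error explicit by bounding the discarded $O(\lVert\mat{E}_j\rVert^2)$ term via $\lVert\mat{E}_j\rVert^2/\mathrm{gap}(\lambda)$ from the standard resolvent expansion; the remaining algebraic manipulations in the proof are essentially bookkeeping on the tridiagonal pattern.
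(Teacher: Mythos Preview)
Your argument is correct and lands on the same formula, but the paper gets there by a different and more elementary route: it simply invokes the variational sandwich from \citet{golub2000large},
\[
\tilde{\vec u}\transpose\mat{E}\tilde{\vec u}\;\le\;\tilde\lambda-\lambda\;\le\;\vec u\transpose\mat{E}\vec u,
\]
which follows directly from the Rayleigh-quotient characterization of the smallest eigenvalue and holds for \emph{any} symmetric perturbation $\mat{E}$, with no smallness or simplicity hypothesis. The tridiagonal expansion of $\vec u\transpose\mat{E}_j\vec u$ into diagonal and super-diagonal terms is then identical to yours. What the paper's approach buys is that the inequality is exact and unconditional, so the gap and simple-eigenvalue caveats you flag never arise; what your Rayleigh--Schr\"odinger approach buys is an explicit $O(\lVert\mat{E}_j\rVert^{2}/\mathrm{gap}(\lambda))$ remainder, which makes the ``$\approx$'' quantitative rather than leaving it implicit in the closeness of the two sides of the sandwich.
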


After exploring the theoretical gap between exact and perturbed Lanczos algorithms, we investigate Greenbaum's result, which shows that each perturbed Lanczos corresponds to an exact version.
\paragraph{Greenbaum's Results~\citep{greenbaum1989behavior}} The tridiagonal matrix $\mat{T}_j$ generated at the end of the $j$\textsuperscript{th} \textit{finite precision} Lanczos process satisfying $\mat{L}\mat{Q}_j = \mat{Q}_j \mat{T}_j + \beta_{j+1} \vec{q}_{j+1}\vec{e}{\transpose}_j + \mat{F}_j,$ where $\vec{e}_j\transpose$ is a vector with the $j$\textsuperscript{th} component one and all the other components zero, $\mat{F} = (\vec{f}_1, \dots, \vec{f}_j)$ is the perturbation term with $\lVert \vec{f}_j \rVert_2  \leq \epsilon \lVert \mat{L} \rVert_2, \epsilon \ll 1$, is the same as that generated by an exact Lanczos process but with a different matrix $\tilde{\mat{L}}$. The matrices $\mat{L}$ and $\tilde{\mat{L}}$ are close in the sense that for any eigenvalue $\lambda(\tilde{\mat{L}})$ of $\tilde{\mat{L}}$, there is an eigenvalue $\lambda(\mat{L})$ of $\mat{L}$ such that $|\lambda(\tilde{\mat{L}}) - \lambda({\mat{L}})| \leq \lVert \mat{F}_j \rVert_2.$ Therefore, in our case with the constant accuracy of the QR factorization, we can show $\mat{P}\mat{L}\mat{P}\tilde{\mat{Q}}_j = \tilde{\mat{Q}}_j\mat{T}_j + \beta_j\tilde{\vec{q}}_{j+1}\vec{e}_j\transpose + \tilde{\mat{F}}_j,$ where \textbf{$\tilde{\mat{F}}_j = \mathbf{O}(\eta)$} with $\eta$ corresponds to the accuracy of the QR method.

Having established each perturbed Lanczos algorithm corresponds to an exact Lanczos algorithm, we demonstrate the theorem below to bound the approximation error, as discussed in~\citep{liao2019lanczosnet}.
\begin{theorem}
Let $\mat{U}\mat{\Lambda}\mat{U}\transpose$ be the eigendecomposition of an $n \times n$ symmetric matrix $\mat{L}$ with $\mat{\Lambda}_{i,i} = \lambda_i,\lambda_1 \geq \dots \geq \lambda_n$ and $\mat{U} = [\vec{u}_1, \dots , \vec{u}_n ]$. Let $\mat{\mathcal{U}}_j \equiv \text{span} \{\vec{u}_1, \dots , \vec{u}_j\}$. Assume $\kappa$-step Lanczos algorithm starts with vector $\vec{\nu}$ and outputs the orthogonal $\mat{Q} \in \mathbb{R}^{n\times \kappa}$ and tridiagonal matrix $\mat{T} \in \mathbb{R}^{ \kappa\times \kappa}$. For any $j$ with $1 < j < n$ and $\kappa > j$, we have 
\begin{multline*}
\lVert \mat{L} - {\mat{Q}}\mat{T} {\mat{Q}}\transpose \rVert^2_F \\ \leq \bigsum{i}{1}{j} \lambda^2_i \Biggl(\frac{
    \sin (\vec{\nu}, \mathcal{U}_i) \prod^{j-1}_{k=1}\frac{\lambda_k - \lambda_N }{\lambda_k - \lambda_j }}{
\cos(\vec{\nu}, \vec{u}_i)T_{\kappa-i}(1 + 2\gamma_i)
}\Biggr)^2
+ \bigsum{i}{j+1}{N} \lambda^2_i,
\end{multline*}
where $T_{\kappa-i}(x)$ is the Chebyshev Polynomial of degree $\kappa - i$ and $\gamma_i = (\lambda_i - \lambda_{i+1})/(\lambda_{i+1} - \lambda_N )$.
\label{converge-L}
\end{theorem}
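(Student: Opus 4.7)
The plan is to derive this bound via the classical Kaniel–Paige–Saad convergence analysis for the symmetric Lanczos algorithm, combined with a Frobenius-norm decomposition that separates the spectrum captured by the Krylov subspace from the untouched tail. The key structural fact is that $\mat{Q}\mat{T}\mat{Q}\transpose$ is the Ritz (orthogonal) projection of $\mat{L}$ onto $\mathcal{K}_\kappa(\mat{L},\vec{\nu}) = \mathrm{span}\{\vec{\nu}, \mat{L}\vec{\nu}, \ldots, \mat{L}^{\kappa-1}\vec{\nu}\}$, so approximation quality is governed entirely by how well this Krylov subspace aligns with the top-$j$ invariant subspace of $\mat{L}$, namely $\mathcal{U}_j$.

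First, using the spectral decomposition $\mat{L}=\sum_i\lambda_i\vec{u}_i\vec{u}_i\transpose$ and the orthogonality of $\mat{Q}$, I would split
$\lVert\mat{L}-\mat{Q}\mat{T}\mat{Q}\transpose\rVert_F^2$
into a head part, $\sum_{i=1}^{j}\lambda_i^2\sin^2(\vec{u}_i,\mathcal{K}_\kappa)$, arising from how much each of the top $j$ eigenvectors deviates from the Krylov subspace, and a tail part bounded directly by $\sum_{i=j+1}^{N}\lambda_i^2$ since those components are simply not targeted. Second, for each $i\le j$, I would apply Saad's variational characterization: for every polynomial $p$ of degree at most $\kappa-1$ with $p(\lambda_i)\neq 0$,
$\tan(\vec{u}_i,\mathcal{K}_\kappa)\le \frac{\max_{k\neq i}|p(\lambda_k)|}{|p(\lambda_i)|}\cdot\tan(\vec{u}_i,\vec{\nu})$,
with $\tan(\vec{u}_i,\vec{\nu})=\sin(\vec{\nu},\mathcal{U}_i)/\cos(\vec{\nu},\vec{u}_i)$. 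Third, I would choose the extremal polynomial as the product of linear factors $\prod_{k=1}^{i-1}(x-\lambda_k)$, which annihilates contributions from the already-converged directions, times a shifted-and-scaled Chebyshev polynomial $T_{\kappa-i}$ on the interval $[\lambda_N,\lambda_{i+1}]$, normalized at $\lambda_i$. This extremal choice produces the denominator $T_{\kappa-i}(1+2\gamma_i)$ and a product $\prod_{k=1}^{i-1}(\lambda_k-\lambda_N)/(\lambda_k-\lambda_i)$; replacing the latter by the looser but $i$-free upper bound $\prod_{k=1}^{j-1}(\lambda_k-\lambda_N)/(\lambda_k-\lambda_j)$ yields the stated uniform form. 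Squaring, multiplying by $\lambda_i^2$, and summing over $i\le j$ then combines with the tail to give the claimed inequality.

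The main obstacle will be the Chebyshev-polynomial bookkeeping: constructing $p$ so that the $i-1$ already-resolved Ritz directions are suppressed, while simultaneously minimizing $\max_{k>i}|p(\lambda_k)|/|p(\lambda_i)|$ on the remaining interval, and then justifying the uniformization step in which the product indexed by $i-1$ is replaced by the one indexed by $j-1$ (which amounts to checking monotonicity of the ratios $(\lambda_k-\lambda_N)/(\lambda_k-\lambda_m)$ in $m$). A secondary subtlety is that the result is stated for the \emph{exact} Lanczos run; to reconcile this with the constrained, finite-precision variant produced by Algorithm~\ref{algo:Lanczos-lin}, I would chain this bound with Greenbaum's result quoted above, since the computed $\mat{T}_j$ coincides with the exact Lanczos output for a nearby matrix $\tilde{\mat{L}}$ whose spectrum differs from that of $\mat{P}\mat{L}\mat{P}$ by $\mathbf{O}(\eta)$, so the bound transfers to $\tilde{\mat{L}}$ modulo a controllable perturbation term.
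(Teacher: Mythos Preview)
Your proposal is correct and follows essentially the same route as the paper's proof: rewrite $\lVert\mat{L}-\mat{Q}\mat{T}\mat{Q}\transpose\rVert_F^2=\lVert\mat{L}(\mat{I}-\mat{Q}\mat{Q}\transpose)\rVert_F^2=\sum_i\lambda_i^2\lVert\mat{P}^{\perp}_{\mat{Q}}\vec{u}_i\rVert^2$, split into head and tail, bound the tail by $\sum_{i>j}\lambda_i^2$, and for the head use $\lVert\mat{P}^{\perp}_{\mat{Q}}\vec{u}_i\rVert=\sin(\vec{u}_i,\mathcal{K}_\kappa)\le\tan(\vec{u}_i,\mathcal{K}_\kappa)$ together with the Kaniel--Paige--Saad Chebyshev bound. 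The paper's appendix actually stops at the $\sin\le\tan$ step and invokes ``the above lemma'' (the classical Saad tangent bound, imported from~\citep{liao2019lanczosnet,parlett1980symmetric}) without restating it, whereas you spell out the polynomial construction and the uniformization from $\prod_{k=1}^{i-1}$ to $\prod_{k=1}^{j-1}$ explicitly; your extra paragraph on chaining with Greenbaum's result is likewise exactly how the paper uses this theorem downstream.
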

\par Based on Greenbaum's results~\citep{greenbaum1989behavior}, for our computed perturbed Lanczos algorithm, exists an exact Lanczos algorithm but for a different matrix. Based on Theorem~\ref{converge-L}, we also cognize the upper bound of the low-rank approximator of the  Lanczos algorithm. Thus, the perturbed Lanczos algorithm, caused by the inaccuracy of the QR method for solving the least square equation, converges to the upper bound of the low-rank approximation of the matrix of the exact Lanczos algorithm.
\subsection{Subgraph Extraction Policy} 
\label{subgraph-extraction-policy}
A subgraph selection policy is a function $\pi\colon \mathcal{G} \rightarrow \mathbb{P}(\mathcal{G})$ assigning to a graph a subset of its subgraphs~\citep{bevilacqua2021equivariant}. Here, $\mathcal{G}$ is the set of all graphs with $n$ nodes or less and $\mathbb{P}(\mathcal{G})$ its power set. Although any linear constraint in the input graph satisfying full rank assumption can be encoded in $\mat{C}$, we propose the following subgraph extraction policies.

\paragraph{Neumann Eigenvalue} The Neumann eigenvalue~\citep{chung1997spectral} is 
\begin{gather*}
\label{neumann-eigenvalue}
   \lambda_{S} =  \inf_{\vec{f} \neq 0} 
 \frac{\bigsumm{x \in S} \vec{f}(x)\mat{L}\vec{f}(x)}{\bigsumm{x \in S}  \vec{f}^{2}(x)d_x}, 
\text{ subject to}\\
\sum_{y \in S, x \in \delta S, y \sim x} (\vec{f}(x) - \vec{f}(y)) = 0 \quad  \text{and} \quad \sum_{x \in S} \vec{f}(x)d_x = 0.
\end{gather*}
The function \(\vec{f}\colon S\cup \delta S \rightarrow \mathbb{R}\) represents the Neumann eigenvector satisfying the Neumann conditions. The vertex boundary, \(\delta S\), of an induced subgraph consists of all vertices not in \(S\) but adjacent to at least one vertex in \(S\). Specifically, the first constraint encodes the link representation. Building on previous link prediction research, we consider nodes that are two hops away from the query nodes, where $S$ represents the one-hop-away nodes, and $\delta S$ denotes the boundary nodes between one-hop and two-hop-away nodes. It ensures that the aggregate of eigenvector differences across nodes equates to zero, as expressed by $\sum (\vec{f}(x) - \vec{f}(y)) = 0$. The column corresponding to the subgraph is constructed based on the equation. The degrees of the nodes involved in the constraint are entered into the column to satisfy the equation. Specifically, the degrees of nodes two-hop-away are negated, and the degrees of nodes one-hop-away are included only if they are connected to nodes two hops away. Entries for uninvolved nodes are set to $0$.

\colorlet{darkgreen}{green!60!black}
\begin{example}
 Consider the graph \vcentered{\begin{tikzpicture}  
  [scale=.1,auto=center,every node/.style={circle, fill=red, minimum size=1mm, inner sep=1pt}] 
    
  \node [blue](a1) at (0,5)   {};  
  \node [orange](a2) at (1.5,5) {}; 
  \node [orange](a3) at (3,6) {};  
  \node [blue](a4) at (4.5,5) {};  
  \node [green!60!black](a5) at (6,5)   {};  
  
  \node [darkgreen](a6) at (0,3){};  
  \node (a7) at (1.5,3) {}; 
  \node (a8) at (3.,2)   {};  
  \node (a9) at (4.5,3) {};  
  \node (a10) at (6,3)  {};  

  \draw[red] (a1) -- (a2);  
  \draw (a7) -- (a8);
  \draw (a4) -- (a5);  
  \draw[] (a1) -- (a6);
  \draw[] (a6) -- (a7);
  \draw[] (a8) -- (a9);
  \draw[red] (a2) -- (a3);
  \draw[red] (a3) -- (a4);
  \draw (a9) -- (a10);
  \draw (a5) -- (a10);    
\end{tikzpicture}}, where the query nodes are \textcolor{orange}{orange}, the nodes in $S$ are \textcolor{blue}{blue}, and the boundary nodes $\delta S$ are \textcolor{darkgreen}{green}. The boundary constraint $\sum_{\textcolor{blue}{y} \in S, \textcolor{darkgreen}{x} \in \delta S, \textcolor{blue}{y} \sim \textcolor{darkgreen}{x}} (\textcolor{blue}{\vec{f}(x)} - \textcolor{darkgreen}{\vec{f}(y)}) = 0 $ is realized by the column $[\textcolor{orange}{0},\textcolor{orange}{0},\textcolor{blue}{1},\textcolor{blue}{1},\textcolor{darkgreen}{-1},\textcolor{darkgreen}{-1},\textcolor{red}{0},\textcolor{red}{0},\textcolor{red}{0},\textcolor{red}{0}]\transpose$ of the matrix $\mathbf{C}$.
\end{example}

 For the induced subgraph case, we populate its corresponding column in $\mathbf{C}$ with the degrees of each node (and fill the remaining entries of the column with $0$ for nodes not involved in the subgraph), ensuring that the derived eigenvectors fulfill the condition $\sum \vec{f}(x)d_x = 0$ (detailed in Vertex-deleted Subgraphs). This condition ensures that the features learned using this eigenbasis reflect the imposed constraints.

Thus, the Neumann eigenvalue problem can be reformulated as~\autoref{eig-prob1}
where \(\mat{L} \in \mathbb{R}^{n\times n}\) is a symmetric and large sparse matrix, and \(\mat{C} \in \mathbb{R}^{n \times l}\) (with \(n \gg l\)) is also large, sparse, and of full column rank. 
The time complexity involved in extracting subgraphs depends on the product of the maximum degree of nodes and the count of nodes in the boundary.
When we enforce that the eigenvectors satisfy the constraints related to induced subgraphs and link representation, we ensure that the corresponding features adhere to these constraints.
\begin{proposition}
Applying Neumann eigenvalue constraints to the eigenbasis results in features that exhibit greater expressivity than MPNNs. Besides addressing the node automorphism problem~\citep{Srinivasan2020On}, these enhanced features enable the distinction of specific $k$-regular graphs from each other, thereby significantly enhancing expressivity in GNNs (Proof in Appendix). 
\label{neumann-features}
\end{proposition}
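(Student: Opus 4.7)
The plan is to split the proposition into its two distinct claims --- (a) resolving the node automorphism problem for pairwise link prediction, and (b) separating certain $k$-regular graphs that 1-WL collapses --- and to prove each by a direct structural argument about the constraint matrix $\mathbf{C}$ and the resulting projected spectral problem.

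First I would handle (a). The key observation is that, unlike an MPNN whose embedding of a node $u$ depends only on the rooted $r$-hop subtree at $u$, the columns of $\mathbf{C}$ built from the Neumann conditions are \emph{anchored at the query link}: they record exactly which nodes sit in the one-hop set $S$, which lie in the boundary $\delta S$, and with what signed degree weights, all computed relative to the specific query pair. Consequently, for two candidate links $(u,w)$ and $(v,w)$ in which $u$ and $v$ are automorphic in $G$ (and hence receive identical MPNN embeddings) but lie at different shortest-path distances to $w$, the constructed columns of $\mathbf{C}$ differ. The induced projectors $\mat{P}$, the compressed operators $\mat{P}\mat{L}\mat{P}$, and therefore the Ritz pairs $(\mat{V},\mat{R})$ returned by Algorithm~\ref{algo:Lanczos-lin} differ as well, yielding different link-level representations and thereby resolving the Srinivasan--Ribeiro ambiguity cited in the statement.

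Next I would address (b) by exhibiting a canonical pair of $k$-regular graphs that 1-WL cannot distinguish --- for example, two $3$-regular graphs on six vertices such as $K_{3,3}$ and the triangular prism --- and fixing a query link in each. Although the Laplacians $\mat{L}$ agree up to relabeling and all MPNN node features collapse, the combinatorics of $S$ and $\delta S$ around the chosen query differ between the two graphs: the number of boundary edges, their endpoints inside $S$, and the degrees inserted into the Neumann column of $\mathbf{C}$ are not the same. The constrained Rayleigh quotient in \autoref{eig-prob1} therefore has different minimizers in the two cases, which implies non-isomorphic $(\mat{V},\mat{R})$ pairs and hence distinct features $\sigma(\mat{V}f(\mat{R})\mat{V}\transpose\mat{X}\mat{W})$ for any non-degenerate choice of the learnable filter $f$, establishing the regular-graph separation claim.

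The main obstacle is making the final ``non-equal features'' step airtight rather than heuristic: two different eigenbases can in principle be contracted against $\mat{X}$, $f(\mat{R})$, and the pooling layer to produce the same scalar output. I would therefore argue that the set of filter parameters for which the outputs of the two graphs coincide has measure zero, splitting into two subcases. Either $\mat{P}_1\mat{L}\mat{P}_1$ and $\mat{P}_2\mat{L}\mat{P}_2$ have distinct Ritz spectra, in which case the MLP $f$ applied to $\mat{R}$ generically produces different filter coefficients; or the two spectra coincide as multisets, in which case I would show that the corresponding eigenvector coordinates at the pooled nodes differ, so that a generic $f$ again separates the outputs. A secondary technical point is ensuring that $\mathbf{C}$ is of full column rank, as required by the projector formula $\mat{P}=\mat{I}-\mat{C}(\mat{C}\transpose\mat{C})^{-1}\mat{C}\transpose$ used in Algorithm~\ref{algo:Lanczos-lin}; this I would discharge by observing that the boundary constraint and the degree-weighted mean constraint involve disjoint supports in their nonzero patterns and are therefore linearly independent by inspection.
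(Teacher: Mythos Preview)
Your overall decomposition into (a) the automorphism claim and (b) the $k$-regular separation claim mirrors the paper's own strategy, and your observation that the Neumann columns of $\mathbf{C}$ are anchored at the query pair is exactly the mechanism the paper exploits. However, there is a structural gap: to claim that LLwLC is \emph{strictly more expressive} than MPNNs you must also argue the inclusion $\mathcal{M}_{\text{MPNN}}\sqsubseteq \mathcal{M}_{\text{LLwLC}}$, i.e., that projecting onto $\mathcal{N}(\mathbf{C}^\top)$ never destroys distinctions that 1-WL already makes. The paper does address this (by arguing, informally, that after projection each node still aggregates neighbour information so LLwLC subsumes message passing), and without it your argument only shows that LLwLC distinguishes \emph{some} pairs MPNNs cannot, which does not by itself yield ``greater expressivity''.

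On the concrete examples, the paper works with different instances than you propose: for (b) it uses the pair of $2$-regular graphs from Figure~\ref{loop-graph} (a hexagon versus two triangles) and writes out the first Neumann boundary constraint explicitly for each, obtaining distinct columns of $\mathbf{C}$; for (a) it uses $C_6$ and shows that the three node-pair orbits induce three different boundary constraints. Your $K_{3,3}$ versus triangular prism pair would likely work too, but be careful: the prism has two edge orbits, so you would need to argue that no choice of query edge in the prism reproduces the $K_{3,3}$ constraint pattern.

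Finally, your genericity argument for the learnable filter $f$ and your remark on the full-rank condition for $\mathbf{C}$ go beyond what the paper actually proves --- the paper is content to conclude ``different representations can be learned'' once the constraint columns differ, without the measure-zero analysis you sketch. That extra rigor is a genuine improvement over the paper's argument, but it does not compensate for the missing $\sqsubseteq$ direction.
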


\paragraph{Vertex-deleted Subgraphs}
\label{universal-approximator}

 The second subgraph extraction policy we propose is based on vertex-deleted subgraphs.  Given that we can encode any subgraph into our eigenbasis, we can examine whether a specific substructure collection can completely characterize each graph. By the reconstruction conjecture~\citep{ulam1960collection}, we assume we can reconstruct the graph if we have all the $n - 1$ vertex-deleted subgraphs. According to~\citet{Illya2023exponentially}, exponentially many graphs are determined by their spectrum. Thus, we propose that each column of the constraint matrix corresponds to a vertex-deleted subgraph, and we include the degrees of the nodes within the corresponding subgraph in each column, representing deleted nodes by zero entries.
 \begin{example}
 Consider the graph \vcentered{%
 \begin{tikzpicture}  
  [scale=.1,auto=center,every node/.style={circle, fill=red, minimum size=1mm, inner sep=1pt}] 
    
  \node [orange](a1) at (0,5)   {};  
  \node [orange](a2) at (1.5,5) {}; 
  \node [orange](a3) at (3,6) {};  
  \node [orange](a4) at (4.5,5) {};  
  \node (a5) at (6,5)   {};  
  
  \node (a6) at (0,3){};  
  \node (a7) at (1.5,3) {}; 
  \node (a8) at (3.,2)   {};  
  \node (a9) at (4.5,3) {};  
  \node (a10) at (6,3)  {};  

  \draw[red] (a1) -- (a2);  
  \draw (a7) -- (a8);
  \draw (a4) -- (a5);  
  \draw[] (a1) -- (a6);
  \draw[] (a6) -- (a7);
  \draw[] (a8) -- (a9);
  \draw[red] (a2) -- (a3);
  \draw[red] (a3) -- (a4);
  \draw (a9) -- (a10);
  \draw (a5) -- (a10);    
\end{tikzpicture}}, where the \textcolor{red}{red} nodes are deleted, and the \textcolor{orange}{orange} nodes are contained in the subgraph. The constraint
$\textcolor{orange}{\sum\vec{f}(x)d_x = 0}$ is represented by the column $[\textcolor{orange}{2},\textcolor{orange}{2},\textcolor{orange}{2},\textcolor{orange}{2},\textcolor{red}{0},\textcolor{red}{0},\textcolor{red}{0},\textcolor{red}{0},\textcolor{red}{0},\textcolor{red}{0}]\transpose$ in the matrix $\mat{C}$.
\end{example}
We build the other columns of the constraint matrix accordingly. The intuition behind defining such a constraint is that it implicitly represents the vertex deleted subgraph. Learning the filters over the eigenvalue matrix of the graph Laplacian leads to defining an affinity matrix based on Gaussian functions and driving the corresponding adjacency matrix based on the threshold~\cite{liao2019lanczosnet}. Thus, $\mat{L} = \mat{I} - \mat{D}^{-1} \mat{W}$ where $\mat{W}$ is the affinity matrix~\cite{coifman2006diffusion}. $ \mat{P} = \mat{D}^{-1} \mat{W}$ is the Markov transition matrix and its powers $\mat{P}^t$ are the probability of $t$-hop-away distance between the nodes $i$ to node $j$.  Thus, the diffusion map is defined based on the right eigenvector and eigenvalues of the Markov transition matrix $\mat{P}$. If $\lambda_i$ and $\vec{\psi}_i$ are the eigenpairs of the matrix $\mat{P}$ then the eigenbasis of diffusion map after $t$-steps for the $i$\textsuperscript{th} value is $\mat{\Phi}_t(i) = (\lambda_1^t\psi_1(t), \lambda_2^t\psi_2(t), ...)$. Projecting to null space of the constraint matrix, $(\mat{I} - (\mat{C}\transpose(\mat{C}\mat{C}\transpose)^{-1}\mat{C}))$ cancels out the nodes with zero entries and considers the others based on the degree we put in the constraint matrix. The diffusion map embedding adapts to reflect diffusion patterns that are aware of the vertex deletions.

\par In extracting vertex-deleted subgraphs from a dense graph, the worst-case time complexity is $\mathcal{O}(n^3)$ due to the high number of edges in dense graphs, leading to increased computational load. For sparse graphs with fewer edges, the complexity is reduced by a factor of $\mathcal{O}(n)$. Our empirical results show that applying a few constraints significantly improves performance, making the practical overhead align linearly with the number of nodes. This approach is similar to the marker-based method~\citep{papp-marking}, as both consider slightly perturbed input graphs. 

Utilizing vertex-deleted subgraphs allows us to distinguish between graphs that are not distinguishable with the 2-WL method. For instance, although the \(4 \times 4\) rook and Shirkhande graphs are indistinguishable with 2-WL, their corresponding constraint matrices—constructed from the vertex-deleted subgraphs—have different spectra and thus different constraint matrices. These differences in spectra lead to distinct eigenbases, enabling us to differentiate between these graphs with our novel LLwLC.

\paragraph{Stochastic Constraints \mat{C}} In our study, we empirically discovered that effective outcomes can be attained with a limited number of constraints, such as Neumann constraints, or in another ablation study involving only ten vertex-deleted subgraphs. This observation aligns with the findings of~\citet{bollobas1990almost}, who demonstrated that almost all graphs can be reconstructed using only three vertex-deleted subgraphs. Given the prohibitive cost of considering all $n$ vertex-deleted subgraphs, our approach, inspired by~\citep{bevilacqua2021equivariant}, involves selecting a subset of constraints for consideration. At each epoch, we stochastically choose $k$ vertex-deleted subgraphs for analysis.

\begin{table}[]
    \caption{Time Complexity Comparison}
    \centering
    \begin{subtable}[t]{0.45\textwidth}
    \caption{LP Tasks' Time Complexity: Number of Nodes (n), Edges (E), Dimensionality of Node Features (d), Propagation Hops (l), Sketch Size in BUDDY (h)~\cite{chamberlain2022}, Eigenvectors ($\kappa$), and Constraints ($k$).}   
        \centering
    \scalebox{0.5}{
    \begin{tabular}{lcccccr}
    \toprule
    Time Complexity  & SEAL  & BUDDY & NBFNet & LLwLC \\ 
    \midrule
    Preprocessing           &  $\mathcal{O}(1)$  & $\mathcal{O}(lE(d+h))$ & $\mathcal{O}(1)$ &$ \mathcal{O}(1)$\\
    Training (1 link)       &  $\mathcal{O}(Ed^2)$& $\mathcal{O}(l^2h + ld^2)$  & $\mathcal{O}(Ed+nd^2)$ &$ \mathcal{O}(\kappa E + k^2n)$\\
    Inference               & $\mathcal{O}(Ed^2)$ & $\mathcal{O}(l^2h+ld^2)$ & $\mathcal{O}(Ed+nd^2)$ & $ \mathcal{O}(\kappa E + k^2n)$\\
    \bottomrule
    \end{tabular}}
    \label{complexitytable}
    \end{subtable}
 \begin{subtable}[t]{0.49\textwidth}
 \caption{Expressivity Models' Time Complexity.}
        \centering
    \scalebox{0.65}{
     \begin{tabular}{lccccccr}
    \toprule
     Time Complexity   &$k$-IGN  & $k$-GNN  & LLwLC  & GSN (3-WL)\\
    \midrule
       Preprocessing & -  & - &$ \mathcal{O}(n^3)$& $\mathcal{O}(n^k)$ & \\  
       Dense & $\mathcal{O}(n^k)$  & $\mathcal{O}(n^k)$ &$ \mathcal{O}(n^3)$& $\mathcal{O}(n^2)$ & \\
       Sparse & $\mathcal{O}(n^k)$  & $\mathcal{O}(n^k)$ &$ \mathcal{O}(n)$& $\mathcal{O}(n)$  &\\
    \bottomrule
    \end{tabular}}
    \label{complexitytable-express}
\end{subtable}
\end{table}
\paragraph{Time Complexity} The time complexity of our method is $\mathcal{O}(\kappa E + k^2n + k^3)$ for the outer loop (Lanczos algorithm), the QR factorization, and computing the pseudo-inverse of $k \times k$ matrix, respectively, where $\kappa \ll n$ is the number of computed eigenvectors and $k \ll n$ is the number of linear constraints. Table~\ref{complexitytable} compares the time complexities of LLwLCNet and other link prediction methods.

\par Aligned with~\citep{barabasi1999emergence} research and supported by our empirical experiments, it has been established that graph reconstruction can be effectively achieved with just a few constraints. This leads to time complexity linear to the number of nodes. Contrastingly, established $k$-WL expressive models, such as $k$-IGNs~\citep{maron2019provably} and $k$-GNNs~\citep{morris2019weisfeiler}, are known to have a higher time complexity of $\mathcal{O}(n^k)$. 
\par Furthermore, we compare LLwLCNet's time complexity with GSN~\citep{bouritsas2022improving}, which improves expressivity by counting specific substructures. However, it relies on task-specific substructure selection to introduce a suitable inductive bias and deals with the subgraph isomorphism. In GSN, the preprocessing step, in general, is $\mathcal{O}(n^k)$ for a generic substructure of size $k$. In contrast, our experimental results demonstrate that LLwLCNet can capture graph properties without requiring task-specific prior knowledge. Comparisons are in Table~\ref{complexitytable-express}.

\section{Experiments}
\par To demonstrate the effectiveness of our method in addressing node automorphism and leveraging subgraphs, we conducted experiments focused on the link prediction task. We compared the performance of LLwLCNet against traditional heuristics (CN~\citep{barabasi1999emergence}, RA~\citep{zhou2009predicting}, AA~\citep{adamic2003friends}), vanilla GNNs (GCN~\citep{kipf2016semi}, SAGE~\citep{NIPS2017_5dd9db5e}), GNNs modifying the input graph of MPNNs (SEAL~\citep{zhang2018link}, NBFNet~\citep{zhu2021neural}), and GNNs with manual features as pairwise representations (Neo-GNN~\citep{yun2021neo}, BUDDY~\citep{chamberlain2022}). Baseline results are from~\citet{chamberlain2022}. Our evaluation includes five link prediction benchmarks: Cora, Citeseer, Pubmed~\citep{yang2016revisiting} (Planetoid datasets), OGBL-Collab, and OGBL-Vessel~\citep{NEURIPS2020_fb60d411}. Dataset statistics are shown in the appendix. Baseline results for OGBL-Collab and OGBL-Vessel are from the OGB leaderboard. 
\paragraph{Setup} 
In our experiments, we used a learning rate of 0.001 for 20 training epochs. The model has two 32-channel MLP layers, each with ReLU non-linearities and dropout. We cap the eigenpairs at 10, padding missing pairs with zeros. Both PyTorch~\citep{paszkeautomatic} and PyTorch Geometric~\citep{Fey/Lenssen/2019} were used in our implementation. The model was trained using the binary cross entropy loss. Following SEAL, we altered 10\% of links for test data and used the remaining 90\% for training.
\paragraph{Link Prediction Results} 
While the LLwLC framework theoretically applies to various problem domains, our experiments focus on the LP tasks to demonstrate its effectiveness in addressing node automorphism, considering substructures, and thus improving expressiveness. 
Consistent with prior studies~\citep{ chamberlain2022}, considering two-hop nodes is sufficient for effective LP task.

As depicted in Table~\ref{result-table}, LLwLC stands out as a robust framework for link prediction, consistently delivering high performance on link prediction benchmarks with metrics given in the first row. 
With only 0.02M parameters, LLwLC outperforms previous models on the Planetoid dataset, highlighting the significance of encoding subgraph structures and selective subsets of node relations for superior link prediction results. Our model also achieves state-of-the-art performance on the OGBL-Collab dataset with just 0.02M, compared to BUDDY (1.10M) and SEAL (0.50M). Increasing blocks to 0.03M yields a 67.50\% HR@50 score. On OGBL-Vessel, we achieve competitive results with just 0.019M and 10\% of the training data, demonstrating LLwLC's effectiveness in a lightweight architecture.

\begin{table*}
\caption{Results on LP benchmarks; LLwLCNet w. Neumann Constraints. 
The colors denote the \textcolor{red}{best} and \textcolor{blue}{second-best} models. LLwLCNet trained with $10\%$ of the VESSEL dataset.
}
\begin{center}
\scalebox{0.65}{
\begin{small}
\begin{sc}
\begin{tabular}{lcccccccr}
\toprule
           & Cora   & Citeseer & Pubmed & Collab  & Vessel \\
Method     & HR@100 & HR@100   & HR@100 & HR@50 & roc-auc  \\
\midrule
CN         & 33.92  & 29.79    & 23.13  & 56.44 &   48.49  \\
AA         & 39.85  & 35.19    & 27.38  & 64.35  &  48.49 \\
RA         & 41.07  & 33.56    & 27.03  & 64.00  & n.a.   \\
\midrule
GCN~\citep{kipf2016semi}        & 66.79  & 67.08    & 53.02  & 44.75 & 43.53  \\
SAGE~\citep{hamilton2017inductive}       & 55.02  & 57.01    & 39.66  & 48.10  & 49.89  \\
\midrule 
Neo-GNN~\citep{yun2021neo}    & 80.42  & 84.67    & 73.93  &  57.52  & n.a.\\
SEAL~\citep{zhang2018link}       & 81.71  & 83.89    & \textcolor{blue}{75.54}&64.74& 80.50\\
NBFnet~\citep{zhu2021neural}     & 71.65  & 74.07    & 58.73  &  OOM &  n.a. \\
Surel+~\citep{yin2023surel+}      & N.A. &  N.A.  & N.A.  &   64.10 &   \n{85.73}     \\
BUDDY~\citep{chamberlain2022}      & \textcolor{blue}{88.00}  & \textcolor{blue}{92.93}    & 74.10  &  \textcolor{blue}{65.94} & 55.14 \\
\midrule
LLwLCNet  & \textcolor{red}{91.44}  & \textcolor{red}{93.40} & \textcolor{red}{83.10}& \textcolor{red}{66.86}& \textcolor{blue}{81.60}      \\
\midrule
\# Params.     & $0.019$M  & $0.018$M    & $0.024$M  & $0.026$M & $0.036$M\\
\bottomrule
\end{tabular}
\end{sc}
\end{small}}
\end{center}

\label{result-table}
\end{table*}

\paragraph{Ablation Studies} To demonstrate the impact of the applied constraints, we compare the eigenbasis estimation using the ground truth Laplacian matrix $\mat{L}$ (with edges provided) against the approximation of the eigenbasis using  $\mat{L}$ alongside Neumann constraints $\mat{C}$. This comparison is conducted on three benchmark datasets, as detailed in Table~\ref{table-1}. We notice that including Neumann constraints, $\mat{C}$, markedly enhances the results. 
\par In our extended research, we explore the impact of adding additional constraints to our framework. As detailed in Table~\ref{table-constraints}, we explore the influence of integrating an increased number of constraints using vertex-deleted subgraphs. Our observations reveal that implementing merely ten constraints from vertex-deleted subgraphs yields state-of-the-art enhancements in benchmark datasets. Notably, within the OGBL-Collab dataset, the constraints significantly boost performance metrics (\emph{e.g.,} HR@50 increases from $42.83$ in LanczosNet, which does not apply linear constraints to the input graph, to $69.40$ with the incorporation of ten vertex-deleted subgraph constraints). Similar improvements are evident in the Cora dataset, with HR@100 rising from $90.80$ to $93.10$, and PubMed shows similar improved results over the baseline. 
Our results are consistent with the theoretical framework. It significantly aligns with~\citep{bollobas1990almost}, suggesting nearly all graphs can be reconstructed using three vertex-deleted subgraphs.

\begin{table}[]
    \caption{Subgraph Constraint Impact. \textcolor{red}{Best} in red.}
    \centering
    \begin{subfigure}{0.52\textwidth}
    \begin{subtable}[t]{\linewidth}
    \caption{Results with/without Neumann constraints (with $\mathbf{L}$).}
        \centering
\scalebox{0.6}{
\begin{tabular}{lcccr}
\toprule
           & Cora                & Citeseer          & PubMed      \\
Method      & AUC                 & AUC               & AUC         \\
\midrule
LanczosNet  &         94.5\%          &   96.5\%   &   97.2\%\\
LLwLC (w. $\mat{L}$ \& $\mat{C}$)   &   \textcolor{red}{97.0\%}   &    \textcolor{red}{98.1\%}  &  \textcolor{red}{98.3\%} \\
\bottomrule
\end{tabular}}
\label{table-1}    
\end{subtable}
\end{subfigure}
 \begin{subtable}[t]{0.52\textwidth}
        \centering
        \caption{Results with different subgraph constraints.}
\scalebox{0.5}{
\begin{tabular}{lcccr}
\toprule
 & Collab   &  Cora    & PubMed   \\
Method                       & HR@50         &  HR@100  & HR@100  \\
\midrule
LanczosNet~\citep{liao2019lanczosnet}    &   42.58 &90.80 &77.18  \\
LLwLCNet w. Neumann Constraints          &   66.86 & 91.44 &\textcolor{red}{83.10}\\
LLwLCNet w. 10 Constraints               &   \textcolor{red}{69.40} & \textcolor{red}{93.10}&82.28\\
\midrule
\# Params.                   & 0.026M        &   0.019M&   0.021M  \\
\bottomrule
\end{tabular}}
    
\label{table-constraints}
\end{subtable}
\end{table}

\begin{table}[h]
\caption{Wall-time comparisons with LP models. Training time is measured for one epoch. The LLwLC model demonstrates state-of-the-art performance in both benchmark datasets while reducing the training data (Reduction) requirement by up to $90\%$, using only $10\%$ of the data.}
\centering
\scalebox{0.54}{
\begin{tabular}{lccccccr}
\toprule
\textbf{Dataset}&   & SEAL                          & BUDDY   &  GCN    & LLwLC  & LLwLC  & LLwLC   \\
\midrule
&\text{Pre-train(s)}    &    0             &  5      &    0    &     0   & 0& 0\\
&\text{Train (s)} &    81                           &  1      &  66      &   7    & 4 &  1\\
\textbf{PubMed}& Reduction   &    100    &  100    &  100  &  10  & 5 & 1\\
 & Accuracy   & 75.54   &  74.10 & 53.02 & \n{80.15}    & 78.20 & 70.84\\
& \# Params.  & 0.486M   & 1.565M &  0.052M  &0.024M & 0.024M   & 0.024M\\
\midrule  
&\text{Train (s)} &    12600&  N.A. & 12600 & 1200   &700 &140 \\
\textbf{OGBL-Vessel}  & Reduction &   100   &  100 & 100&  10  & 5 & 1\\
& Accuracy  &    80.50  & 55.14& 43.53   & \n{81.60}&   79.24 & 78.72\\
 &\# Params.   & 0.042M  & N.A. &  0.035M  & 0.036M &  0.036M  & 0.036M\\
\bottomrule
\end{tabular}}
\label{walltime}
\end{table}
\paragraph{Wall-time and Reduction Dataset}
Our study has shown an enhancement in model expressivity, leading to a decrease in the need for parameters. This results in less training data being required. As shown in Table~\ref{walltime}, on the PubMed dataset, our model surpasses BUDDY~\citep{chamberlain2022} and SEAL~\citep{zhang2018link} using merel $10\%$ of the training data, which results in a training speed that is 20 times faster than SEAL. Furthermore, it delivers performance on par with BUDDY while using just $1\%$ of the training data within the same training duration and without requiring any preprocessing steps. This enhanced efficiency reflects a more efficient learning mechanism and significantly reduces training duration. Regarding the OGBL-Vessel dataset, our approach outperforms SEAL~\citep{zhang2018link}, demonstrating a 10-fold increase in training speed. It achieves similar outcomes at a speed 90 times faster using only 1\% of the training data. These experiments were all carried out on a GeForce GT1030 GPU (CUDA 11.6, PyTorch 1.13).

\section{Related Work}

\paragraph{MPNN Expressivity} The expressivity of GNNs is typically expressed in terms of their ability to distinguish non-isomorphic graphs. 
As no polynomial-time algorithm for solving the graph isomorphism problem is known, developing GNNs that are both expressive and efficient poses a major challenge.
\citet{xu2018how} found that the expressivity of MPNNs is limited to that of the 1-WL test. This limitation is crucial in real-world applications, as the 1-WL test cannot distinguish certain structures, such as regular graphs, and does not capture several natural graph properties well, such as distances and cycle counts~\cite{li2022expressive}. Recent studies have addressed these limitations with four approaches: adding random attributes to nodes~\cite{sato2021random}, using deterministic positional features~\cite{zhang2018link}, developing higher-order GNNs to surpass the 1-WL test's expressivity limits~\cite{maron2018invariant}, and subgraph GNNs applying markings, such as the node-deletion approach of ESAN~\cite{bevilacqua2021equivariant}. Please refer to the recent survey by \citet{wl_survey} for a comprehensive overview of these techniques. Our approach aligns with the direction of subgraph GNNs.
\paragraph{Subgraph GNNs for Link Prediction} 
 SEAL enhances WLNM~\cite{zhang2017weisfeiler}, the first subgraph-based LP, by using graph convolutional layers and encoding positional features. SEAL demonstrates that information within two-hop subgraphs is sufficient, aligning with classical methods. Neo-GNN~\cite{yun2021neo} and BUDDY~\cite{chamberlain2022} decouple pairwise representation from node representation learning to reduce computational overhead but may oversimplify pairwise representations.
\section{Conclusion}
In this work, we introduced the Learnable Lanczos algorithm with Linear Constraints (LLwLC), a novel method designed to enhance the expressivity of Graph Neural Networks (GNNs). Through the incorporation of two novel subgraph extraction strategies, we  managed to construct a lightweight architecture that minimizes reliance on extensive training datasets. Empirical results show that our method significantly improves performance in link prediction tasks across various benchmark datasets. Notably, the LLwLC achieved 20$\times$ and 10$\times$ speedup, requiring only 5\% and 10\% data from the PubMed and OGBL-Vessel datasets respectively, compared to the state-of-the-art methods. These findings underscore the practical utility and theoretical advancement of our method, illustrating the LLwLC's potential as a more expressive approach than 2-WL and its ability to differentiate between $k$-regular graphs. The advancements made with the LLwLC not only represent a significant contribution to the field but also set a promising direction for future exploration and development in the realm of GNNs. As a further future work, we believe investigating the impact of learning linear constraints between nodes and edges within the input graph and encoding them to the eigenbasis of the graph Laplacian matrix, can promise further advancements in this field.

\textbf{Acknowledgment} This research was funded by the Austrian Research Promotion Agency (FFG) under project no. 874065. The author sincerely appreciates the valuable feedback from Mohsen Fayyaz, Joshua Erde, and Andr{\'a}s Papp.

\bibliography{example_paper}
\bibliographystyle{icml2025}

\newpage
\appendix
\onecolumn

\label{appendix}
\section{Theoretical Analyses}
\textbf{Proof of Theorem~\ref{therotical-eigs-diff}.} 
It is the immediate result of the following theorem discussed in~\cite{golub2000large}.
\begin{theorem}
Let $\mathcal{U}$ and $\tilde{\mathcal{U}}$ be the eigenspaces corresponding to the smallest eigenvalues $\lambda$ and $\tilde{\lambda}$ of the symmetric matrices $\mat{A}$ and $ \tilde{\mat{A}} = \mat{A} + \mat{E}$, respectively. Then

1. For any $\vec{u} \in \mathcal{U}$ and $\tilde{\vec{u}} \in \tilde{\mathcal{U}}$ with $\lVert \vec{u} \rVert_2 = \lVert \tilde{\vec{u}} \rVert_2 = 1$,
\begin{equation*}
\tilde{\vec{u}}\transpose \mat{E}\tilde{\vec{u}} \leq \tilde{\lambda} - \lambda \leq \vec{u}\transpose\mat{E}\vec{u}.
\end{equation*}

2. For any $\tilde{\vec{u}} \in \tilde{\mathcal{U}}$ with $ \lVert \vec{u} \rVert_2 = 1$, there exists $\vec{u} \in \mathcal{U}$ with $\lVert \tilde{\vec{u}} \rVert_2 = 1$ such that
\begin{equation*}
\beta \leq \lVert \vec{u} - \tilde{\vec{u}} \rVert_2 \leq \beta (1 + \frac{\beta^2}{1 + \sqrt{2}}),
\end{equation*}
where $\beta$ satisfies

$\text{max} \{ 0,
\frac{\lVert \mat{E}\tilde{\vec{u}} \rVert_2 - |\tilde{\lambda} - \lambda|}{\tilde{d}_{\text{max}}}$, $\frac{\lVert \mat{E}\vec{u} \rVert_2 - |\tilde{\lambda} - \lambda|}{d_{\text{max}}}\}\leq \beta \leq \text{min} \{ \frac{\lVert \mat{E}\hat{\vec{u}} \rVert_2}{\tilde{d}_{\text{min}}},\frac{\lVert \mat{E}\vec{u} \rVert_2}{d_{\text{min}}}\}$

with

$\tilde{d}_{\text{min}} = \text{min}\{|\tilde{\lambda} - \lambda(\mat{A})| | \lambda(\mat{A}) \neq \lambda\}$,

$\tilde{d}_{\text{max}} = \text{max}\{|\tilde{\lambda} - \lambda(\mat{A})| | \lambda(\mat{A}) \neq \lambda\}$,

$d_{\text{min}} = \text{min}\{|\lambda - \lambda(\tilde{\mat{A}}) | | \lambda(\tilde{\mat{A}}) \neq \tilde{\lambda}\}$,

$d_{\text{max}} = \text{max}\{|\lambda - \lambda(\tilde{\mat{A}}) | | \lambda(\tilde{\mat{A}})  \neq \tilde{\lambda}\}$.
\end{theorem}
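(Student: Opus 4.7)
The plan is to handle the two parts of the theorem independently, since part 1 is essentially a Rayleigh-quotient calculation while part 2 requires an eigenbasis expansion together with a resolvent-type identity. For part 1, I would start from the Rayleigh-Ritz characterization: because $\lambda$ and $\tilde{\lambda}$ are the \emph{smallest} eigenvalues of the symmetric matrices $\mat{A}$ and $\tilde{\mat{A}}$, the optimal Rayleigh quotients satisfy $\lambda = \vec{u}\transpose\mat{A}\vec{u}$ and $\tilde{\lambda} = \tilde{\vec{u}}\transpose\tilde{\mat{A}}\tilde{\vec{u}}$, while $\tilde{\lambda}\le\vec{x}\transpose\tilde{\mat{A}}\vec{x}$ and $\lambda\le\vec{x}\transpose\mat{A}\vec{x}$ for every unit vector $\vec{x}$. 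Testing these inequalities at $\vec{x}=\vec{u}$ and $\vec{x}=\tilde{\vec{u}}$ respectively, and using $\tilde{\mat{A}}=\mat{A}+\mat{E}$, yields $\tilde{\lambda}\le\lambda+\vec{u}\transpose\mat{E}\vec{u}$ and $\lambda\le\tilde{\lambda}-\tilde{\vec{u}}\transpose\mat{E}\tilde{\vec{u}}$, which is exactly the sandwich claimed.

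For part 2, the plan is to expand $\tilde{\vec{u}}=\sum_i c_i\vec{u}_i$ in the eigenbasis $\{\vec{u}_i\}$ of $\mat{A}$, define $\vec{u}$ as the \emph{normalized} orthogonal projection $P_{\mathcal{U}}\tilde{\vec{u}}/\lVert P_{\mathcal{U}}\tilde{\vec{u}}\rVert_2$, and reduce the squared distance to the elementary identity
\begin{equation*}
\lVert\vec{u}-\tilde{\vec{u}}\rVert_2^{2}\,=\,2\bigl(1-\lVert P_{\mathcal{U}}\tilde{\vec{u}}\rVert_2\bigr)\,=\,2\Bigl(1-\sqrt{1-s^{2}}\Bigr),\qquad s^{2}:=\lVert P_{\mathcal{U}^{\perp}}\tilde{\vec{u}}\rVert_2^{2}=\!\!\sum_{i:\lambda_i(\mat{A})\neq\lambda}\!\!c_i^{2}.
\end{equation*}
The coefficients outside $\mathcal{U}$ are controlled via the ``Sylvester'' identity obtained by projecting $(\mat{A}-\tilde{\lambda}\mat{I})\tilde{\vec{u}}=-\mat{E}\tilde{\vec{u}}$ onto each $\vec{u}_i$: for $\lambda_i(\mat{A})\neq\lambda$ this gives $c_i=-\vec{u}_i\transpose\mat{E}\tilde{\vec{u}}/(\lambda_i(\mat{A})-\tilde{\lambda})$. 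Summing the squares and replacing the denominators by the worst- and best-case gaps produces the two-sided bound $\lVert P_{\mathcal{U}^{\perp}}\mat{E}\tilde{\vec{u}}\rVert_2/\tilde{d}_{\max}\le s \le \lVert\mat{E}\tilde{\vec{u}}\rVert_2/\tilde{d}_{\min}$. The symmetric manoeuvre---expanding $\vec{u}$ in the eigenbasis of $\tilde{\mat{A}}$ and using $(\tilde{\mat{A}}-\lambda\mat{I})\vec{u}=\mat{E}\vec{u}$---delivers the mirror estimate with $d_{\min},d_{\max}$ and $\lVert\mat{E}\vec{u}\rVert_2$, and taking the max of lower bounds and min of upper bounds furnishes the stated range of $\beta$.

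The remaining work is to translate $s$ into the claimed two-sided estimate on $\lVert\vec{u}-\tilde{\vec{u}}\rVert_2$. For the linear part, $1-\sqrt{1-s^{2}}\ge s^{2}/2$ immediately yields $\lVert\vec{u}-\tilde{\vec{u}}\rVert_2\ge s\ge\beta$; for the upper bound one Taylor-expands $1-\sqrt{1-s^{2}}\le s^{2}/2\cdot\bigl(1+s^{2}/(1+\sqrt{2})\bigr)$ (or uses the algebraic inequality $\sqrt{2(1-\sqrt{1-t})}\le\sqrt{t}\,(1+t/(1+\sqrt{2}))$ with $t=s^{2}$) to recover the $\beta(1+\beta^{2}/(1+\sqrt{2}))$ factor.

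The main obstacle I anticipate is the subtractive lower bound $(\lVert\mat{E}\tilde{\vec{u}}\rVert_2-|\tilde{\lambda}-\lambda|)/\tilde{d}_{\max}$: one has to split $\mat{E}\tilde{\vec{u}}$ into its projections onto $\mathcal{U}$ and $\mathcal{U}^{\perp}$, observe that the component along $\mathcal{U}$ equals $(\tilde{\lambda}-\lambda)P_{\mathcal{U}}\tilde{\vec{u}}$ up to the Sylvester identity evaluated on $\mathcal{U}$, and then apply the reverse triangle inequality $\lVert P_{\mathcal{U}^{\perp}}\mat{E}\tilde{\vec{u}}\rVert_2\ge\lVert\mat{E}\tilde{\vec{u}}\rVert_2-\lVert P_{\mathcal{U}}\mat{E}\tilde{\vec{u}}\rVert_2\ge\lVert\mat{E}\tilde{\vec{u}}\rVert_2-|\tilde{\lambda}-\lambda|$, invoking part 1 to control the $\mathcal{U}$-component. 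Once this piece is in place the rest is bookkeeping, so I expect this single step to be where most of the work concentrates.
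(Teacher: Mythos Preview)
The paper does not give its own proof of this statement: in the appendix it is simply quoted as ``the following theorem discussed in~[golub2000large]'' and used as a black box to derive Theorem~1. Your proposal therefore goes strictly beyond what the paper does, supplying an actual argument where the paper only cites one.

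Your outline is the standard route and is essentially correct. Part~1 is exactly the Rayleigh--Ritz calculation you describe. Part~2 is a Davis--Kahan\,/\,$\sin\theta$ style argument: set $\beta=s=\lVert P_{\mathcal{U}^{\perp}}\tilde{\vec u}\rVert_2$, recover the coefficients outside $\mathcal{U}$ from $(\mat{A}-\tilde\lambda\mat{I})\tilde{\vec u}=-\mat{E}\tilde{\vec u}$, and bound them by the extremal gaps $\tilde d_{\min},\tilde d_{\max}$; the mirror identity $(\tilde{\mat A}-\lambda\mat{I})\vec u=\mat{E}\vec u$ gives the $d_{\min},d_{\max}$ side. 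Two small remarks. First, the step you singled out as needing part~1 does not actually need it: projecting the Sylvester identity onto $\mathcal{U}$ gives $P_{\mathcal{U}}\mat{E}\tilde{\vec u}=(\tilde\lambda-\lambda)P_{\mathcal{U}}\tilde{\vec u}$ directly, so $\lVert P_{\mathcal{U}}\mat{E}\tilde{\vec u}\rVert_2\le|\tilde\lambda-\lambda|$ without any appeal to the eigenvalue sandwich. Second, when $\mathcal{U}$ and $\tilde{\mathcal{U}}$ are one-dimensional the two sines coincide and the symmetric bounds merge cleanly into the single $\beta$; in the degenerate (multi-dimensional eigenspace) case one has $\lVert P_{\tilde{\mathcal{U}}^{\perp}}\vec u\rVert_2\le s$ for the specific $\vec u$ you construct, which is enough for the lower bounds but means the upper bound $\beta\le\lVert\mat{E}\vec u\rVert_2/d_{\min}$ needs a short extra argument (or the simplifying assumption of simple smallest eigenvalues, which is the setting Golub--Ye work in). None of this affects the overall strategy, which is sound.
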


\textbf{Proof of Theorem~\ref{converge-L}}
As addressed by~\cite{liao2019lanczosnet, parlett1980symmetric}, we have $\mat{L}\mat{Q} = \mat{Q}\mat{T}$ from the Lanczos algorithm. Therefore,
\begin{equation*}
	\lVert \mat{L} - \mat{Q}\mat{T} \mat{Q}^{\transpose}\rVert^{2}_F = \lVert \mat{L} - \mat{L}\mat{Q}\mat{Q}^{\transpose}\rVert^{2}_F = \lVert \mat{L}(\mat{I} - \mat{Q}\mat{Q}^{\transpose})\rVert^{2}_F 
\end{equation*}

Let $\mat{P}^{\perp}_{\mat{Q}} \equiv \mat{I} - \mat{Q}\mat{Q}^{\transpose}$, the orthogonal projection onto the orthogonal complement of subspace $\text{span}\{\mat{Q}\}$. Relying on the eigendecomposition we have,

\begin{flalign*}
\lVert \mat{L} - \mat{Q}\mat{T}\mat{Q}^{\transpose} \rVert^2_F = \lVert \mat{U}\mat{\Lambda}\mat{U}^{\transpose}(\mat{I} - \mat{Q}\mat{Q}^{\transpose})\rVert^2_F = \\
\lVert \Lambda \mat{U}^{\transpose}(\mat{I} - \mat{Q}\mat{Q}^{\transpose})\rVert^{2}_F=
\lVert(\mat{I} - \mat{Q}\mat{Q}^{\transpose})\mat{U}\Lambda\rVert^{2}_F= \\
  \lVert[\lambda_1\mat{P}^{\perp}_{\mat{Q}} \vec{u}_1, \dots , \lambda_N \mat{P}^{\perp} _{\mat{Q}} \vec{u}_N]\rVert^{2}_F,
\end{flalign*}
where we use the fact that $\lVert R\mat{A} \rVert^{2}_F = \lVert \mat{A} \rVert^{2}_F$
for any orthogonal matrix $\mat{R}$ and $\lVert \mat{A}\rVert^{2}_F = \lVert \mat{A} \rVert^{2}_F.$ 
Note that for any $j$ we have,
\begin{align*}
\lVert [\lambda_1\mat{P}^{\perp}_{\mat{Q}}\vec{u}_1, \dots , \lambda_N \mat{P}^{\perp}_{\mat{Q}}\vec{u}_N] \rVert^2_F = \\\bigsum{i}{1}{N}\lambda_i^2\lVert \mat{P}^{\perp}_{\mat{Q}} \vec{u}_i \rVert^2\leq \lambda_i^2\lVert \mat{P}^{\perp}_{\mat{Q}} \vec{u}_i\rVert^2 +  \bigsum{i}{j+1}{N} \lambda_i^2,
\end{align*}
where we use the fact that for any $i, \lVert \mat{P}^{\perp}_{\mat{Q}} \vec{u}_i\rVert^2 = \lVert \vec{u}_i\rVert^2 -\lVert \vec{u}_i - \mat{P}^{\perp}_{\mat{Q}} \vec{u}_i\rVert^2 \\\leq \lVert \vec{u}_i\rVert^2 = 1.$
Note that we have $\text{span}\{\mat{Q}\}=\text{span} \{\nu, \mat{L}\nu, \dots, \mat{L}_{K-1}\nu\} \equiv \kappa_{K}$ from the Lanczos algorithm. Therefore, we have, 
\begin{equation*}
\lVert \mat{P}^{\perp}_{\mat{Q}} \vec{u}_i \rVert = |\text{sin} (\vec{u}_i , \kappa_K)| \leq \\|\text{tan} (\vec{u}_i , \kappa_K)|.
\end{equation*}
We finish the proof by applying the above lemma with $\mat{A} = \mat{L}$.

\section{Proof of Proposition~\ref{neumann-features}.}
\label{prrof-neumann}

We prove the two statements of Proposition~\ref{neumann-features} separately.

\begin{proposition}
    Let $\mathcal{M}_{\text{LLwLC}}$ be the family of LLwLC models and $\mathcal{M}_{\text{MPNN}}$
that of MPNNs. $\mathcal{M}_{\text{LLwLC}}$ is more powerful than $\mathcal{M}_{\text{MPNN}}$
($\mathcal{M}_{\text{MPNN}}\sqsubseteq \mathcal{M}_{\text{LLwLC}}$).
\end{proposition}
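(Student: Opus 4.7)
The plan is to establish the inclusion in two separate stages: first, that every MPNN can be functionally emulated (up to graph-distinguishing power) by some instance of LLwLC, and second, that there exist pairs of graphs which LLwLC separates but no MPNN can. Together these yield the strict containment $\mathcal{M}_{\text{MPNN}} \sqsubsetneq \mathcal{M}_{\text{LLwLC}}$.

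For the containment direction, I would exploit the fact that MPNNs are bounded in distinguishing power by the 1-WL test~\citep{xu2018how}. Hence it suffices to exhibit an LLwLC configuration that reaches at least 1-WL expressivity. The natural choice is to take an empty (or trivially satisfied) constraint matrix $\mat{C}$, so that the projector $\mat{P}$ becomes the identity and Algorithm~\ref{algo:Lanczos-lin} reduces to the standard Lanczos iteration on $\mat{L}$. With $\kappa$ Lanczos steps, $\mat{V} f_i(\mat{R}) \mat{V}^\top$ produces a degree-$\kappa$ polynomial filter of $\mat{L}$ restricted to the Krylov subspace generated by the starting vector; by choosing $f_i$ as a learnable polynomial (which the MLP over $\mat{R}$ can realize), each LLwLCNet block can implement a linear combination of neighbor-aggregation operators of the form $\mat{L}^p \mat{X}$. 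Stacking $k$ such blocks with non-linearities $\sigma$ then subsumes a GCN-style message-passing update, and by standard arguments (e.g., universal approximation of the injective multiset aggregator via MLPs, as in GIN) one recovers the full 1-WL-equivalent expressive power. Thus every function computable by an MPNN up to isomorphism can be matched by some element of $\mathcal{M}_{\text{LLwLC}}$.

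For the strictness direction, I would invoke a concrete pair of non-isomorphic graphs that are 1-WL indistinguishable but separated by LLwLC. The two $2$-regular graphs displayed in Figure~\ref{loop-graph} are exactly such witnesses: any MPNN produces identical node embeddings on both graphs because every node has identical local structural view. In contrast, for LLwLC equipped with Neumann eigenvalue constraints, the constraint matrix $\mat{C}$ depends on the vertex boundary $\delta S$ and on the connectivity pattern across that boundary, which differs between the two graphs. Consequently the projectors $\mat{P}$ differ, the operators $\mat{P}\mat{L}\mat{P}$ have different spectra, and the Ritz pairs $(\mat{R}, \mat{V})$ yield distinct features after the spectral convolution block. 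Because the Neumann condition simultaneously addresses the node-automorphism obstruction highlighted by~\citet{Srinivasan2020On}, the same argument gives distinct pairwise representations for automorphic node pairs that MPNNs must collapse.

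The main obstacle I anticipate is the containment direction rather than strictness. Specifically, making the statement ``LLwLC can emulate an MPNN'' fully rigorous requires care, since LLwLC uses a low-rank Lanczos approximation of $\mat{L}$ rather than $\mat{L}$ itself, so polynomials in the Ritz operator only approximate polynomials in $\mat{L}$. I would handle this by appealing to Theorem~\ref{converge-L}, which bounds $\lVert \mat{L} - \mat{Q}\mat{T}\mat{Q}^\top\rVert_F$, and then arguing that for a sufficient number of Lanczos steps $\kappa$ the approximation gap is smaller than any discriminative threshold needed to reproduce the 1-WL partition; combined with the combinatorial (rather than metric) nature of graph distinguishability, this gives the required emulation. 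Strictness then follows immediately from the witness graphs above, completing $\mathcal{M}_{\text{MPNN}} \sqsubsetneq \mathcal{M}_{\text{LLwLC}}$.
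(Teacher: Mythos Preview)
Your two-part strategy and your choice of witness graphs for strictness match the paper exactly: both you and the authors invoke the two $2$-regular graphs of Figure~\ref{loop-graph} and argue that the Neumann constraint vectors differ between them, yielding distinct projectors and hence distinct Ritz features. The paper carries this out a bit more explicitly by writing down the actual constraint equations $2\vec{f}(4)-\vec{f}(3)-\vec{f}(2)=0$ versus $\vec{f}(2)+\vec{f}(3)-\vec{f}(4)-\vec{f}(5)=0$, but the idea is identical.

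Where you diverge is the containment direction. You reach $1$-WL by \emph{degenerating} the model: set $\mat{C}$ empty so that $\mat{P}=\mat{I}$, recover plain Lanczos, and argue that polynomial filters in $\hat{\mat{L}}$ plus stacked nonlinearities simulate message passing, with Theorem~\ref{converge-L} controlling the low-rank gap. The paper instead keeps the Neumann constraints active and argues that the projection onto $\mathcal{N}(\mat{C}^\top)$ still preserves the local adjacency structure of $\mat{L}$, so each LLwLC block continues to aggregate neighbor features and is therefore at least as discriminating as an MPNN. Your route is cleaner in that it isolates a concrete sub-family (LanczosNet) already known to sit at the $1$-WL boundary, but it incurs the approximation headache you flag, and it relies on the trivial-constraint configuration actually belonging to $\mathcal{M}_{\text{LLwLC}}$. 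The paper's route avoids the approximation issue entirely because it does not need $\hat{\mat{L}}$ to be close to $\mat{L}$; it only needs the constrained operator to retain neighbor-wise aggregation, which holds structurally rather than asymptotically in $\kappa$. On the other hand, the paper's argument that ``projecting to the null space preserves graph connectivity'' is stated rather informally, so neither proof of containment is fully rigorous; they simply lean on different heuristics.
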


\begin{proof}
To show that LLwLCs are strictly more expressive than the 1-WL test, it suffices to show that (i) LLwLC can distinguish a pair of graphs that 1-WL deems isomorphic, and (ii) LLwLC can distinguish all graphs that 1-WL can distinguish.

\par MPNNs cannot distinguish the two graphs in Figure~\ref{loop-graph}, as all nodes have degree two, resulting in uniform node embeddings. Conversely, LLwLC, utilizing the Neumann eigenvalue constraints, effectively discriminates between these graphs. We consider nodes that are two hops away from the query nodes, where $S$ represents the one-hop-away nodes, and $\delta S$ denotes the boundary nodes between one-hop and two-hop-away nodes. Specifically, the first Neumann eigenvalue constraint, $\bigsumm{}{}{} \vec{f}(x) - \vec{f}(y) = 0$, for the bottom graph is $\bigsumm{}{}{} \vec{f}(x) - \vec{f}(y) =  (\vec{f}(4) - \vec{f}(3)) + (\vec{f}(4) - \vec{f}(2)) = 2 \vec{f}(4) - \vec{f}(3) - \vec{f}(2) = 0$, which can be written in the vector form as $\vec{c}{\transpose}\vec{f} = 0$ (Please note that $\vec{f}(4)$ corresponds to dark blue node  two hops away while $\vec{f}(1)$ and $\vec{f}(2)$ corresponds to light blue node one hop away). A similar computation for the second graph yields a different vector representation, $\bigsumm{}{}{} \vec{f}(x) - \vec{f}(y) =  (\vec{f}(2) - \vec{f}(4)) + (\vec{f}(3) - \vec{f}(5)) = \vec{f}(2) + \vec{f}(3) - \vec{f}(4) - \vec{f}(5) = 0$, demonstrating LLwLC's discriminative capability (Please note that $\vec{f}(4)$ and $\vec{f}(5)$ correspond to dark blue nodes two hops away while $\vec{f}(1)$ and $\vec{f}(2)$ correspond to light blue nodes one hop away).
The same constraints are generated for all edges of the two input graphs, allowing distinguishable graph representations to be obtained.

\par Similarly, in LLwLC, after reconstructing the eigenbasis by projecting the eigenbasis of a graph's Laplacian matrix into the null space of the constraint matrix defined by Neumann eigenvalue constraints, each node considers its neighbor nodes to update its features. This method positions LLwLC to be at least as expressive as MPNNs, meaning it can solve any sample solvable by MPNNs (The constrained eigenvalue problem is equivalent to quadratic programming with linear constraints, where projecting to the null space preserves graph connectivity and the graph Laplacian. The linearly independent and complementary constraint columns maintain local connectivity, making the model at least as expressive as the 1-WL test. Notably, this projection allows each node to consider both neighboring node features and those in the constraint matrix). 

\end{proof}

\par Furthermore, as proved by~\cite{Srinivasan2020On}, the model family of MPNNs suffers from the automorphic node problem. Let $\mathcal{M}$ be a family of models. We say $\mathcal{M}$ suffers from the automorphic node problem if for any model $M \in \mathcal{M}$, and any simple (attributed) graph $G = (V, E, \mathbf{A})$ we have that $\forall (u_1, v_1), (u_2, v_2) \in V \times V, \{ u_1 \sim_G u_2 \land v_1 \sim_G v_2\} \Rightarrow M((u_1, v_1)) = M((u_2, v_2))$, where $v_1\sim_G v_2$ denotes that there is an automorphism $\varphi$ of $G$ with $\varphi(v_1)=v_2$~\cite{chamberlain2022}.

The LLwLC method effectively tackles the node automorphism problem. 
This method's effectiveness stems from leveraging Neumann eigenvalue constraints, distinguishing edge representations among nodes separated by $k$-hops-away and those $(k+1)\text{-hops-away}$ ($k=1$ in this example), lying at the graph's boundary. This distinction in edge representation across different orbits within the $C_6$ graph, as indicated in Figure~\ref{orbits_BUDDY}, is crucial for the LLwLC approach to address node automorphism effectively. 

\begin{proposition}
 The family of LLwLC models does not suffer from the automorphic node problem.
\end{proposition}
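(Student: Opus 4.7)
The plan is to refute the universal statement defining the automorphic node problem by exhibiting a single LLwLC model $M$, a graph $G$, and two pairs of nodes $(u_1,v_1),(u_2,v_2)$ with $u_1\sim_G u_2$ and $v_1\sim_G v_2$ for which $M$ assigns different outputs. A natural candidate, hinted at by the paper's reference to the $C_6$ graph and its edge orbits, is to take $G=C_6$ and two query pairs that are individually automorphic at the node level but lie at different graph distances (say distance $1$ versus distance $2$), hence belong to different edge orbits. For such pairs, any MPNN must coincide on $(u_1,v_1)$ and $(u_2,v_2)$, but I will argue that LLwLC does not.

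First I would fix the Neumann-constraint subgraph extraction policy from Section~\ref{subgraph-extraction-policy}, which, for a given query pair, builds the columns of the constraint matrix $\mat{C}$ from the set $S$ of one-hop neighbours of the queries and the boundary $\delta S$ of two-hop nodes, with entries determined by node degrees according to $\sum_{y\in S,\,x\in\delta S,\,y\sim x}(\vec{f}(x)-\vec{f}(y))=0$. For the two candidate query pairs in $C_6$, I would explicitly write down the induced one- and two-hop neighbourhoods and the corresponding columns of $\mat{C}$, showing that the resulting constraint matrices $\mat{C}^{(1)}$ and $\mat{C}^{(2)}$ have different supports (or different column spans) as subspaces of $\mathbb{R}^n$, precisely because the two edges occupy inequivalent edge orbits even though the endpoints are related by node automorphisms.

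Next I would invoke the LLwLC pipeline from Algorithm~\ref{algo:Lanczos-lin} and Equation~\ref{eig-prob1}: the eigenbasis $\mat{V}=\mat{Q}\mat{B}$ is obtained by running the Lanczos algorithm inside $\mathcal{N}(\mat{C}^\top)$. Since $\mathcal{N}(\mat{C}^{(1)\top})\neq \mathcal{N}(\mat{C}^{(2)\top})$, the projectors $\mat{P}^{(1)}$ and $\mat{P}^{(2)}$ differ, and so do the constrained Ritz pairs $(\mat{V},\mat{R})$ produced for the two queries. Passing these through the LLwLCNet block $\sigma(\mat{V}f(\mat{R})\mat{V}^\top \mat{X}\mat{W})$, followed by pooling and the fully connected head, yields link representations that are generically distinct; I would then simply pick concrete, generic choices of the MLP weights $f$ and $\mat{W}$ (the family $\mathcal{M}_{\text{LLwLC}}$ contains all such choices) for which $M((u_1,v_1))\neq M((u_2,v_2))$, thus producing the required counterexample.

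The main obstacle I expect is the last step: moving from ``constraint matrices differ'' to ``there exist model parameters for which outputs differ.'' This is not automatic, because in principle the eigenbasis could differ while cancellation through $\mat{X},\mat{W},f$ still yields equal outputs. The cleanest route is to observe that the map from parameters to outputs is analytic and nonconstant on the parameter space, so the set of weights on which the two outputs coincide has measure zero; hence a witness $M$ exists. A secondary subtlety is ensuring the two chosen query pairs really are related by node automorphisms individually while the pairs themselves are not; for $C_6$ this follows from vertex-transitivity of the cycle combined with the distinction between the distance-$1$ and distance-$2$ edge orbits, which I would verify directly.
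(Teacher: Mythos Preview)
Your proposal is correct and follows essentially the same route as the paper: both use the vertex-transitive cycle $C_6$, pick query pairs lying in different edge orbits (the paper compares the distance-$1$, distance-$2$, and distance-$3$ pair orbits), compute the Neumann constraint columns for each query pair, and observe that the resulting constraint matrices differ, so the constrained eigenbases and hence the learned link representations can be made distinct. If anything, you are more careful than the paper about the final step from ``$\mat{C}^{(1)}\neq\mat{C}^{(2)}$'' to ``some choice of $(f,\mat{W})$ separates the outputs,'' which the paper simply asserts.
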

\begin{proof}
It suffices to provide an example of a LLwLC model and a graph $G$ such that the model distinguishes two node pairs, whose nodes are automorphic. Consider the example in Figure~\ref{orbits_BUDDY}, where all nodes are in the same single orbit w.r.t.~the automorphism group $\Aut(G)$.
However, when $\Aut(G)$ acts on node pairs, a non-trivial orbit partition is induced, and the ability to distinguish node pairs in different orbits is crucial for LP tasks.
For the pair (2, 6) in the red orbit, its first Neumann eigenvalue constraint is significantly related to the pairs (1, 3) and (3, 5). Meanwhile, the pair (2, 4) in the black orbit has a first Neumann eigenvalue constraint that associates the pairs (1, 3) and (5, 6). On the other hand, the pair (2, 5) in the green orbit primarily focuses on the pair (1, 3) regarding its first Neumann eigenvalue constraint. Thus, different representations can be learned for node pairs in different orbits.
\end{proof}
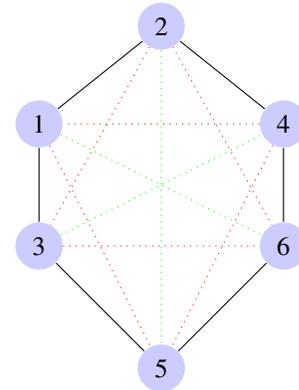
\begin{figure}[ht]
  \centering
\begin{tikzpicture}  
  [scale=.26,auto=center, every node/.style={circle,fill=blue!20}]   
    
  \node (a2) at (1,5)  {1};  
  \node (a3) at (3.5,7)  {2};  
  \node (a4) at (1,2.5) {3};  
  \node (a5) at (6,5)  {4};  
  \node (a6) at (3.5,0)  {5};  
  \node (a7) at (6,2.5)  {6};  
  
  \draw (a2) -- (a3);  
  \draw (a2) -- (a4);  
  \draw (a4) -- (a6);  
  \draw (a3) -- (a5);  
  \draw (a6) -- (a7);  
  \draw (a5) -- (a7);

  \draw[draw = red, dotted ] (a3) -- (a7);
  \draw[draw = red, dotted ] (a3) -- (a4);
  \draw[draw = red, dotted ] (a4) -- (a7);
  \draw[draw = red, dotted ] (a5) -- (a2);
  \draw[draw = red, dotted ] (a5) -- (a6);
  \draw[draw = red, dotted ] (a2) -- (a6);

  \draw[draw = green, dotted ] (a3) -- (a6);
  \draw[draw = green, dotted ] (a2) -- (a7);
  \draw[draw = green, dotted ] (a4) -- (a5);
\end{tikzpicture}  
\caption{The $C_6$ graph shows three different orbits of node pairs with three different colors. Both BUDDY~\cite{chamberlain2022} features and our proposed features can distinguish them.}
\label{orbits_BUDDY}
\end{figure}
\section{Dataset Details.} 
\label{datasetss}
\begin{table*}[]
\caption{ Properties of LP benchmarks. Confidence intervals are +/- one standard deviation. Splits for the Planetoid datasets are random and Collab uses the fixed OGB splits.}
\begin{center}
\scalebox{0.36}{
\resizebox{\textwidth}{!}{
\begin{sc}
\begin{tabular}{lccccccr}
\toprule
           & Cora     & Citeseer & Pubmed    & Collab       \\
\midrule
\# Nodes   & 2708     & 3327     & 18717     & 235868          \\
\# Edges   & 5278     & 4676     & 44,327    & 1,285,465     \\
splits     & rand     & rand     & rand      & time         \\
avg deg    &  3.9     & 2.74     & 4.5       & 5.45        \\
avg deg    &  15.21   & 7.51     & 20.25     & 29.70  \\
1-hop size &  12+/-15 & 8+/-8    & 12+/-17   & 99 +/-251  \\
2-hop size &127+/-131 & 58+/-92  & 260+/-432 & 115+/-571  \\
\bottomrule
\end{tabular}
\end{sc}}}
\end{center}
\label{dataset-benchmarks}
\end{table*}
\par The fundamental characteristics of the experimental datasets, as well as the details regarding the increase in subgraph dimensions with respect to the number of hops, are presented in Table~\ref{dataset-benchmarks}.

\section{Lanczos Algorithm}
\par For a given symmetric matrix $\mat{L} \in \mathbb{R}^{n\times n}$ and a randomly initialized vector $\vec{v} \in \mathbb{R}^n$, the n-step Lanczos algorithm computes an orthogonal matrix $\mat{Q} \in \mathbb{R}^{n \times m}$ and a symmetric tridiagonal matrix $\mat{T} \in \mathbb{R}^{m \times m}$, such that $\mat{Q}^{\transpose}\mat{L}\mat{Q} = \mat{T}$.
We denote $\mat{Q} = [\vec{q}_1, \dots, \vec{q}_N]$ where column vector $\vec{q}_i$ is the $i^{\text{th}}$ Lanczos vector. $\mat{T}$ is the tridiagonal matrix with the eigenvector and eigenvalue matrices $\mat{B}\in \mathbb{R}^{m \times m}$ and $\mat{R}\in \mathbb{R}^{m \times m}$, respectively. $\mat{Q}$ forms an orthonormal basis of the Krylov subspace $\mathcal{K}_n(\mat{L}, \vec{b})$ and its first K columns form the orthonormal basis of $\mathcal{K}(\mat{L}, \vec{x})$. By investigating the $j^{\text{th}}$ column of the system $\mat{L}\mat{Q} = \mat{Q}\mat{T}$ and rearranging terms, we obtain $\mat{L}\vec{q}_j = \beta_{j+1} \vec{q}_{j+1} + \beta_{j}\vec{q}_{j-1} + \alpha_j \vec{q}_j$, and the first $j$ steps of the Lanczos process take the form $\mat{L}\mat{Q}_j = \mat{Q}_j \mat{T}_j + \beta_{j+1} \vec{q}_{j+1}\vec{e}^{\transpose}_j$~\cite{liao2019lanczosnet}. 
One can verify that $\mat{Q}$ forms an orthonormal basis of the Krylov subspace $\mathcal{K}_N(\mat{L}, \vec{b})$ and the first K columns of $\mat{Q}$ forms the orthonormal basis of $\mathcal{K}(\mat{L}, \vec{x})$. Intuitively, if we investigate the $j^{\text{th}}$ column of the system $\mat{L}\mat{Q} = \mat{Q}\mat{T}$ and rearrange terms, we obtain $\mat{L}\vec{q}_j = \beta_{j+1} \vec{q}_{j+1} + \beta_{j}\vec{q}_{j-1} + \alpha_j \vec{q}_j$, and the first $j$ steps of the Lanczos process take the form $\mat{L}\mat{Q}_j = \mat{Q}_j \mat{T}_j + \beta_{j+1} \vec{q}_{j+1}\vec{e}^{\transpose}_j$~\cite{liao2019lanczosnet}.

\begin{algorithm}[]
\captionof{algorithm}{Lanczos Algorithm}
\label{algo:Lanczos-}
\begin{algorithmic}[1]
\STATE \textbf{Input}{$\mat{L}, \vec{\nu}, \kappa, \epsilon$}
\STATE \textbf{Init}{$\vec{\nu}_1 = \vec{\nu}, \beta_1 = \lVert \vec{\nu}_1 \rVert_2, \vec{q}_0 = 0$.}
  \FOR{ $j = 1$ to $\kappa$}
    \STATE $\vec{q}_j = \frac{\vec{\nu}_j}{\beta_j}$
    \STATE $\vec{u}_j = \mat{L}\vec{q}_j - \beta_{j}\vec{q}_{j-1}$ 
    \STATE $\alpha_j = \vec{u}_j\transpose\vec{q}_j$ 
    \STATE $\vec{\nu}_{j+1} = \vec{u}_j - \alpha_j\vec{q}_j$ 
    \STATE $\beta_{j+1} = \lVert \vec{\nu}_{j+1}\rVert_2$  
    \STATE If $\beta_{j+1} \leq \epsilon$, quit
  \ENDFOR
  \STATE $\mat{Q} = [\vec{q}_1, \dots, \vec{q}_{\kappa} ]$\\
  \STATE Construct $\mat{T}$\\
  \STATE $\mat{B}\mat{R}\mat{B}\transpose = \text{evd}(\mat{T})$\\
  \STATE \textbf{Return} $\mat{V} = \mat{Q}\cdot\mat{B}$ and $\mat{R}$
\end{algorithmic}
\end{algorithm}

\end{document}